\newcommand{\cmark}{\ding{51}}
\newcommand{\xmark}{\ding{55}}
\definecolor{darkgreen}{rgb}{0.0, 0.5, 0.0}
\theoremstyle{plain}
\newtheorem{theorem}{Theorem}[section]
\theoremstyle{definition}
\newtheorem{definition}[theorem]{Definition}
\newtheorem{assumption}[theorem]{Assumption}
\theoremstyle{remark}
\crefname{hypothesis}{Hypothesis}{Hypotheses}
\newtcolorbox[auto counter, use counter=theorem, number within=section]{hypothesis}[1][]{
  colback=gray!10,
  colframe=black,
  fonttitle=\bfseries,
  title=Hypothesis~\thetcbcounter, % prints the (shared) theorem counter value
  label type=hypothesis,           % so \cref{...} prints "Hypothesis <num>"
  #1
}
\newcounter{conjecture}
\newtcolorbox{conjecture}[1][]{
    colback=blue!5!white,
    colframe=blue!75!black,
    fonttitle=\bfseries,
    title=Conjecture~\theconjecture,
    before upper=\stepcounter{conjecture},
    #1
}
\title{The Serial Scaling Hypothesis}
\author{%
  Yuxi Liu$^{1}$\thanks{Equal contribution.} \quad
  Konpat Preechakul$^{1}$\footnotemark[1] \quad
  Kananart Kuwaranancharoen$^{2}$ \quad
  Yutong Bai$^{1}$ \\
  \\
  $^1$UC Berkeley \quad
  $^2$Independent Researcher \\
}
\begin{document}

\maketitle

\begin{abstract} 
While machine learning has advanced through massive parallelization, we identify a critical blind spot: some problems are fundamentally sequential. These "inherently serial" problems—from mathematical reasoning to physical simulations to sequential decision-making—require sequentially dependent computational steps that cannot be efficiently parallelized. 
We formalize this distinction in complexity theory, and demonstrate that current parallel-centric architectures face fundamental limitations on such tasks. 
Then, we show for first time that diffusion models despite their sequential nature are incapable of solving inherently serial problems.
We argue that recognizing the serial nature of computation holds profound implications on machine learning, model design, and hardware development. 
% As AI tackles increasingly complex reasoning, deliberately scaling serial computation—not just parallel computation—is essential for continued progress.
\end{abstract}

\begin{center}
\vspace{-0.5em}
\href{https://serial-scaling-hypothesis.github.io/}{\texttt{serial-scaling-hypothesis.github.io}}
\end{center}

\section{Introduction}
The scaling up of machine learning has driven remarkable progress \citep{achiam2023gpt,hoffmann2022training,dosovitskiy2021image, kaplan2020scaling,krizhevsky2012imagenet}, much of this has come from parallel scaling: hardware shifted from CPUs to massively parallel GPUs, architectures moved from RNNs to highly parallelizable Transformers, and algorithms increasingly exploit parallel compute \citep{dao2024flashattention-2,chowdhery2023palm, jouppi2017in-datacenter1}. But some problems stubbornly resist such advances \citep{aggarwal2025l1,kazemi2025big-bench, li2024chain, merrill2023parallelism, marcus2018deep, chollet2019measure}: for a class of tasks, only scaling serial computation—allowing models to perform more sequential steps—yields further progress. It seems not all scaling is created equal.

The necessity of deep or sequential models for some problems has remained mostly theoretical \citep{telgarsky2016benefits,merrill2025little,chen2024theoretical}. 
Only recently has scaling test-time computation become recognized, independently of train-time computation \citep{snell2024scaling,openailearning2024,wu2025inference}. 
Yet this dichotomy still overlooks the role of parallel vs. serial computation, which is central to this paper.
% However, this dichotomy still underrecognizes the role of parallel vs. serial computations, which is the main thesis of this paper.
The literature on scaling still commonly reports both parameter counts and computation FLOPs each as a single number, treating width (parallel) and depth (serial computation) interchangeably \citep{kaplan2020scaling,hoffmann2022training,openailearning2024,snell2024scaling}.

\begin{wrapfigure}{r}{0.4\textwidth}
    \vspace{-2em}
    \includegraphics[width=\linewidth]{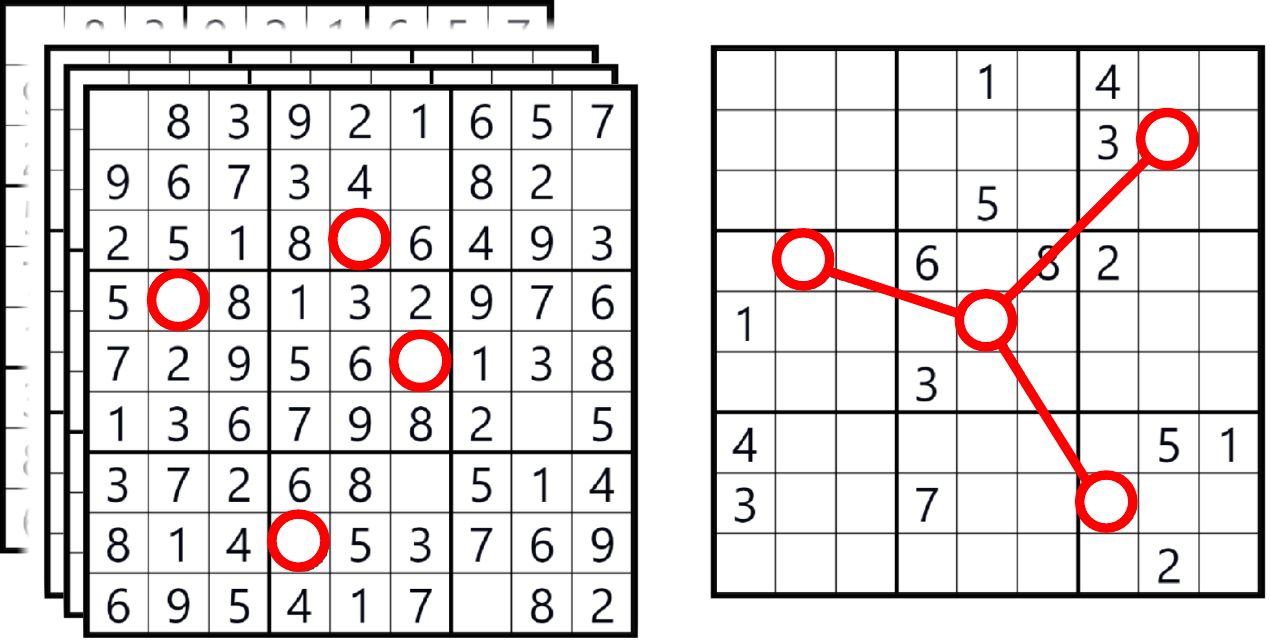}
    % \caption{The four circled blanks in the easy Sudoku have no interdependence and can be filled in parallel. The blanks in a hard Sudoku puzzle are densely interdependent, such that one blank cannot be filled conclusively until one has worked out a long chain of consequences for many others.}
    % \caption{Easy Sudoku (left): the circled blanks can be filled independently, in parallel. 
% Hard Sudoku (right): the circled blanks are interdependent, requiring a chain of sequential reasoning.}
\caption{(Left) Many easy Sudoku puzzles, where the circled blanks can be filled independently in parallel. 
(Right) A hard Sudoku with the same total compute, but the circled blanks are interdependent, requiring sequential reasoning.}
    \label{fig:full-sudoku}
    \vspace{-2em}
\end{wrapfigure}

Consider Sudoku—a number-placement puzzle requiring each number 1–9 to appear once per row, column, and subgrid—as a parable (\cref{fig:full-sudoku}). Easy puzzles can be solved by filling in many blanks independently, in parallel. Hard puzzles, however, require a long chain of dependent reasoning: each blank depends on the others. No algorithm can shortcut the process.
% \footnote{Technically, parallelism exponential in the puzzle size could shortcut the process. But this isn't tractable for even moderately sized puzzles.}

Solving many easy puzzles may cost the same total computation as solving a single hard one, but only the easy ones can be sped up with more processors.
This distinction—between problems that are ``wide'' (parallel) and those that are ``deep'' (inherently serial)—is fundamental, yet it is underappreciated in machine learning.

% \textbf{We propose the Serial Scaling Hypothesis:} For many important ML problems, especially those involving complex reasoning, planning, or the evolution of interacting systems, increasing parallel computation alone is insufficient. Progress requires scaling the amount of serial computation.

\begin{tcolorbox}[colback=gray!10, colframe=black, title=\textbf{The Serial Scaling Hypothesis (SSH)}]
% For many important ML problems, especially those involving complex reasoning, planning, or the evolution of interacting systems, increasing parallel computation alone is insufficient. Progress requires scaling the amount of serial computation.
For many important ML problems such as reasoning, decision making, and modeling dynamic systems, increasing parallel computation alone is insufficient. Progress requires scaling the amount of serial computation.
\end{tcolorbox}

% This hypothesis is motivated by both theory and practice. Complexity theory shows that some problems cannot be efficiently parallelized. Empirically, tasks such as mathematical reasoning and sequential decision-making benefit disproportionately from models that can perform more serial steps (e.g., deeper networks, longer chains of thought, more search iterations).

The appeal of this hypothesis is that it is both theoretically sound and practically relevant:
\begin{itemize}[leftmargin=*]
    \item \textbf{Grounded in theory:} Complexity theory proves some problems parallelize efficiently, while others do not \citep{greenlaw1995limits}.
    \item \textbf{Explains past successes:} The breakthrough of deep learning came from increasing network depth \citep{lecun2015deep, prince2023understanding}, and Chain-of-Thought (CoT) improves performance by adding more serial steps \citep{kojima2022large, li2024chain, merrill2024expressive}.
    \item \textbf{Connects to practice:} The parallel–serial lens connects well to practice. Formal results show bounded serial capacity in Transformers \citep{merrill2023parallelism} and expanded capacity with CoT \citep{li2024chain, merrill2024expressive}, matching empirical gains \citep{aggarwal2025l1,muennighoff2025s1}. \textbf{Our new analysis} extends this picture to diffusion: despite thousands of iterations, diffusion models with a $\mathsf{TC}^0$ backbone have only constant computation depth, consistent with observed step-count plateaus \citep{ma2025inference-time, ravishankar2024scaling}.
\end{itemize}

\textbf{Why does this matter for machine learning?} 
As we push toward more challenging tasks—advanced reasoning, physical simulations, planning, and scientific discovery—we encounter problems that parallel architectures (like Transformers and diffusion models) cannot efficiently solve. Recognizing this has several implications:
\begin{itemize}[leftmargin=*]
    \item Model: Should we revisit architectures that allow for deeper or more sequential computation?
    \item Hardware: Is it time to invest in faster, lower-latency processors, not just more parallel ones?
    % \item Evaluation: Should we measure and report serial compute separately from total compute?
    \item Task analysis: Can some inherently serial tasks be reformulated into other tasks with reduced serial structure, while remaining useful in practice?
\end{itemize}

\textbf{Contributions.} We (i) introduce the Serial Scaling Hypothesis (SSH), (ii) prove that diffusion models, despite many iterative steps, have limited serial capacity, (iii) prove that certain Markov decision problems require serial computation for good decisions, (iv) identify machine learning problems where serial computation is essential, and (v) discuss SSH implications for machine learning.

\section{Machine Learning from the Serial Perspective}

% Banner with CA
\begin{figure}[t]
    \centering
    \includegraphics[width=\textwidth]{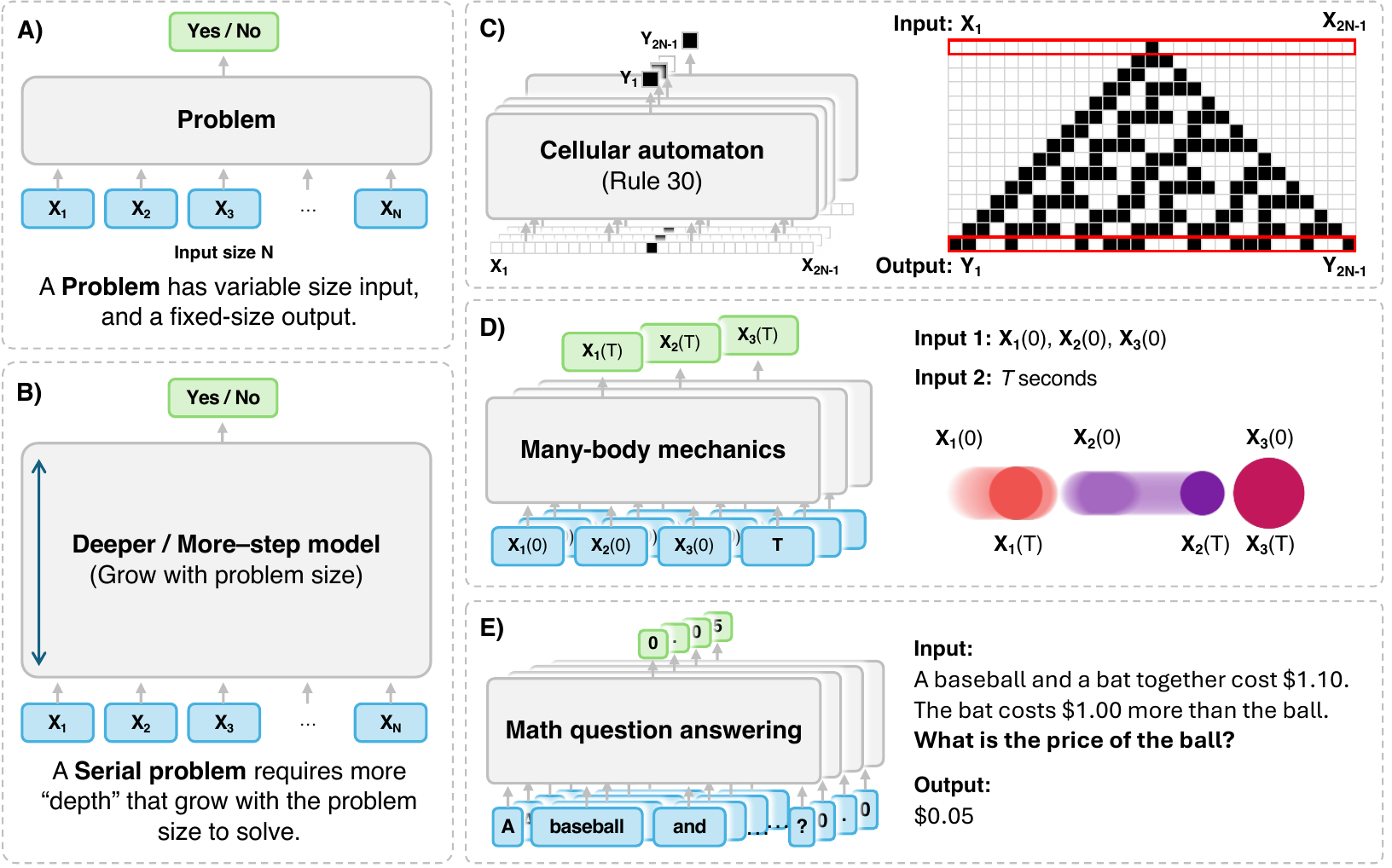}
    \caption{. 
    (A) A \textbf{decision problem} has a variable-size input and a fixed-size output (e.g., ``yes''/``no'').
    (B) A \textbf{serial problem} requires deeper or more steps as the problem size grows. 
    Examples of serial problems are:
    (C) \textbf{Cellular automaton}: takes the initial state as input and outputs a discrete value of the row $N$ at cell $i$ for $i \in \{1, \dots, 2N-1\}$. 
    (D) \textbf{Many-body mechanics}: takes initial positions and momenta of each particle with time $T$ as inputs and outputs the particle locations at time $T$ in a limited-precision space. 
    (E) \textbf{Math QA}: takes a question as input and outputs the answer autoregressively, with each output from a fixed set of possibilities.
    }
    \label{fig:problem_types}
    \vspace{-1em}
\end{figure}
% Banner without CA but with RL
% \begin{figure}[t]
%     \centering
%     \includegraphics[width=\textwidth]{figs/banner_noca.pdf}
%     \caption{. 
%     (A) \textbf{Decision problem} has a variable-size input $N$ and a binary output: ``yes'' / ``no''. 
%     (B) \textbf{Serial problem} requires more depth or steps as input size grows. 
%     Examples of serial problems are:
%     (C) Many-body mechanics: takes the initial positions and momenta of each particle with time $T$ as inputs and outputs the locations, encoded in finite-precision numbers, for each particle at time $T$. 
%     (D) Sequential decision problems: takes the current state and outputs the optimal action in finite-precision.
%     (E) Math and science QA: takes in the question and outputs the answer autoregressively, each output chosen from a limited set of possibilities.
%     }
%     \label{fig:problem_types}
%     \vspace{-1em}
% \end{figure}

We first formalize inherently serial problems (\cref{sec:prelims}), show they are not only theoretically valid but also pervasive in machine learning (\cref{sec:serial_problems_short}). We then examine why predominant ML models struggle with these tasks (\cref{sec:serial_models_short}). Finally, we discuss broader implications for ML (\cref{sec: SSH implications}).

\subsection{Complexity Framework for Inherently Serial Problems}
\label{sec:prelims}

We focus our attention on \textit{binary decision problems} which take in $N$ input tokens and output either ``yes'' or ``no'' as depicted in \cref{fig:problem_types}(A). While seemingly limited, this framework includes all problems with discrete responses, since making a discrete choice out of $2^n$ options is equivalent to answering $n$ yes/no questions. 
\cref{fig:problem_types}~(C–E) show how cellular automata, many-body mechanics, and open-ended question answering can be represented in this way.

We adopt the complexity class $\mathsf{TC}$ (\textbf{T}hreshold \textbf{C}ircuits) to formally distinguish between serial and parallel problems, since it is the standard theoretical framework that formalizes the serial--parallel divide, and it has strong connections to neural networks.

% \begin{definition}[Informal, see~\cref{sec:tc}] \label{def:TC}
%     A problem belongs to the class ($\sf{L}$-uniform) \( \mathsf{TC}^i \) if it can be solved by a family of Boolean circuits—one for each problem size $N$—with $\mathsf{poly}(N)$ width and \( \mathcal{O} (\log^i N) \) depth, using gates such as AND, OR, NOT, and MAJORITY. 
%     % (which output 1 when more than half of their inputs are 1). 
%     Here, $\sf{L}$-uniformity means that there exists a single program that would output the schematics of the $N$-th circuit using only $\mathcal{O}(\log N)$ space during the computation.
%     % Here, $\sf{L}$-uniformity means that there exists a single program that would output the schematics of the $n$-th circuit using only $\mathcal{O}(\log n)$ space during the computation.
%     The class \( \mathsf{TC} \) is defined as the union of all such \( \mathsf{TC}^i \) for  \( i \geq 0 \). 
% \end{definition}

\begin{definition}[Informal, see~\cref{sec:tc}]
% moved L-uniform to the appendix
\label{def:TC}
A problem is in \( \mathsf{TC}^i \) if, for inputs of size $N$, it can be solved by a Boolean circuit with polynomial width and polylogarithmic depth, using basic logic gates (AND, OR, NOT) plus majority gates.  
The class \( \mathsf{TC} \) is the union of all such \( \mathsf{TC}^i \) for $i \geq 0$.
\end{definition}

% This class provides a formal framework for reasoning about the computational power of multilayer perceptrons (MLPs)~\citep{parberry1988parallel}. 
Intuitively, a problem belongs to $\mathsf{TC}$ if and only if it can be solved by a multilayer perceptron (MLP) with \textit{polynomial width and polylog depth}, where polylog denotes $\sf{poly}(\log N)$~\citep{parberry1988parallel}. 
A problem belongs to $\mathsf{TC}^0$ if and only if it can be solved by a constant-depth MLP.
We consider $\mathsf{TC}$ problems as \textit{parallel} problems, since there are sublinear-depth MLPs that solve them.

\begin{wrapfigure}{r}{0.4\textwidth}
    \vspace{-0em}
    \includegraphics[width=\linewidth]{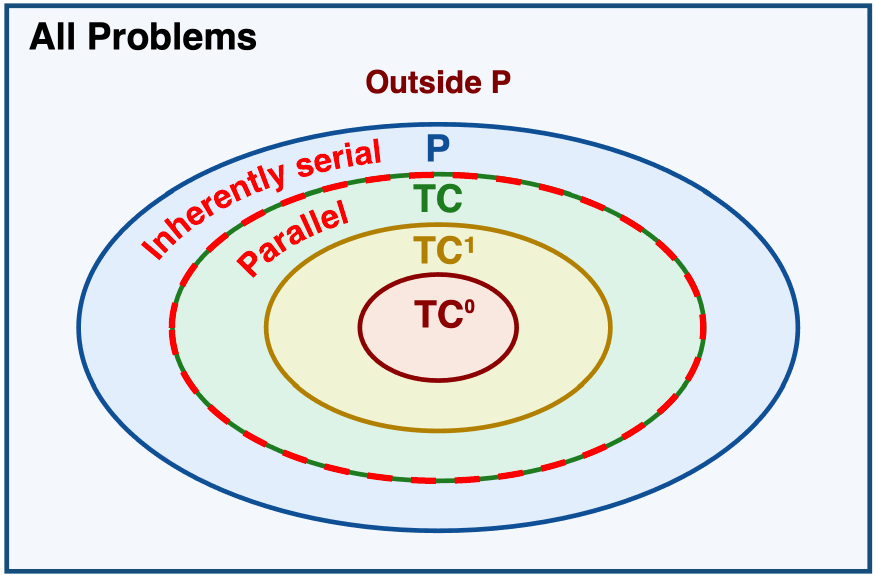}
    \caption{The complexity classes are nested as $\mathsf{TC}^0 \subseteq \mathsf{TC}^1 \subseteq \dots \subseteq \mathsf{TC} \subseteq \mathsf{P}$. 
    Each containment is widely believed to be strict. 
    Problems in $\mathsf{TC}$ are parallel, while those outside are inherently serial.}
    \label{fig:venn}
    \vspace{-2.5em}
\end{wrapfigure}

Although the universal approximation theorem \citep{cybenko1989approximation,hornik1989multilayer} shows that a 3-layer MLP with unbounded width can approximate any continuous function, a constant-depth MLP with only polynomial width is far more limited, capturing only problems in $\mathsf{TC}^0$.

A common phenomenon in computational complexity is that one can make a neural network shallower at the cost of exponentially larger width~\citep{valiant1983exponential, hajnal1993threshold, sherstov2007separating, williams2014nonuniform, oliveira2015majority, eldan2016power, liang2016why, cohen2016expressive, telgarsky2016benefits,merrill2025little}. However, such exponential-sized networks are intractable in both memory and computation. We consider them as \textit{inefficient} solutions in this paper. 
This exponential depth--width trade-off underlines the importance of characterizing neural networks \textit{depth} and \textit{width} separately, which motivates the Serial Scaling Hypothesis. 

To formally characterize the problems in terms of depth, we have the following definition:

% serial problems
\begin{definition}  \label{def:serial}
A problem $\mathcal{P}$ is \textbf{parallel} if $\mathcal{P} \in \mathsf{TC}$; otherwise, it is \textbf{inherently serial}.
\end{definition}

\begin{assumption} \label{as:tc_p}
    We adopt the widely held belief that $\mathsf{TC} \subsetneq \mathsf{P}$---that is, some polynomial-time problems are inherently serial~\cite[Ch. 5, Ch. 8]{greenlaw1995limits}.
\end{assumption}

\subsection{Real-world problems are likely inherently serial} \label{sec:serial_problems_short}

A natural objection to theory: “Do such theoretical obstructions to parallelism arise in the real world?”
We say: Yes—particularly if one works in domains involving the need for physical simulation or reinforcement learning. But even outside these areas, it remains highly likely that one will encounter inherently serial problems. Even apparently simple tasks with a polynomial-time algorithm can be inherently serial (Assumption \ref{as:tc_p}).

Drawing from both empirical observations and theoretical arguments presented in the following \cref{sec:problems}, we propose the following complexity-theoretic hypothesis:

\begin{hypothesis}[label={conj: problems}]
The following problems—cellular automata evolution, many-body mechanics, sequential decision problems, mathematical question answering—are inherently serial.
\end{hypothesis}

Inherently serial problems (see Definitions \ref{def:TC}, \ref{def:serial}) exhibit fundamental computational dependencies: the outcome of intermediate steps directly influences subsequent steps in ways that cannot be shortcut without compromising correctness. To make this intuition rigorous, we prove a new theorem (\cref{thm:serial-rl}) on inherent seriality in sequential decision making.

\textbf{Limitations.} (1) These arguments apply to the general (worst-case) complexity of the problems. A problem may be inherently serial in general, but a parallel algorithm may cover most specific instances occurring in practice. (2) An inherently serial problem may be solved approximately in parallel to an accuracy acceptable for practical use. 

\subsection{Are our models capable of solving inherently serial problems?}
\label{sec:serial_models_short}

\begin{wraptable}{r}{0.47\textwidth}
    \vspace{-1.2em}
    \centering
    \footnotesize
    \begin{tabular}{l c c}
        \hline
        \multirow{2}{*}{\textbf{Method}} & \multirow{2}{*}{\textbf{Parallel?}} & \multirow{2}{*}{\begin{tabular}[c]{@{}c@{}}\textbf{Solve serial} \\ \textbf{problem?}\end{tabular}} \\
        & & \\
        \hline
        FF MLPs & \textcolor{darkgreen}{\cmark} & \textcolor{purple}{\xmark} \\
        \rowcolor{gray!15}
        FF Transformers & \textcolor{darkgreen}{\cmark} & \textcolor{purple}{\xmark} \\
        FF SSMs (Mamba) & \textcolor{darkgreen}{\cmark} & \textcolor{purple}{\xmark} \\
        \rowcolor{gray!15}
        RNNs & \textcolor{purple}{\xmark} & \textcolor{darkgreen}{\cmark} \\
        Repeating layers & \textcolor{purple}{\xmark} & \textcolor{darkgreen}{\cmark} \\
        \rowcolor{gray!15}
        Chain-of-Thought & \textcolor{purple}{\xmark} & \textcolor{darkgreen}{\cmark} \\
        \multirow{2}{*}{\begin{tabular}[t]{@{}l@{}}Diffusion models \\ ($\sf{TC}^0$ backbone)\end{tabular}} & \multirow{2}{*}{\textcolor{purple}{\xmark}\footnotemark} & \multirow{2}{*}{\textcolor{purple}{\xmark}} \\
        & & \\
        \hline
    \end{tabular}
    \caption{Parallelizable models are limited to parallel problems. Only non-parallelizable models may solve inherently serial problems. FF stands for Feedforward.}
    \label{tab:nn_comparison}
    \vspace{-1.7em}
\end{wraptable}
\footnotetext[\value{footnote}]{\label{fn:diffusion}While diffusion models are not yet parallelizable, given their $\mathsf{TC}^0$ capabilities, there might exist a parallelization algorithm for diffusion models.}

Most modern architectures are designed for scaling parallel computation. Transformers and state-space models (SSMs) \citep{gu2021efficiently,gu2023mamba}, though often called “sequence” models, process all available input in parallel. This raises a crucial question: ``How efficiently do they solve inherently serial problems?''

Prior work shows that for fixed-depth, polynomial width, and fixed-precision, various architectures, including MLPs, Transformers, and linear SSMs, all collapse into constant-depth threshold circuits ($\mathsf{TC}^0$) \citep{merrill2022saturated, chiang2024transformers, chen2024computational, merrill2024illusion}. Intuitively, their computational graphs all collapse into constant parallel steps, never requiring long sequential chains.
% Importantly, this result concerns the \emph{architecture} alone. A \emph{model}, however, combines an architecture with an inference procedure: when a parallelizable architecture such as a Transformer is used with serial inference such as autoregressive CoT or recurrence, the overall model escapes $\mathsf{TC}^0$ and can capture inherently serial computations (see \cref{sec:arch_vs_inf}).
Importantly, these results characterize the \emph{architectures themselves}. 
A \emph{model}, however, is the combination of both architecture and inference procedure. 
Thus, while a transformer architecture lies in $\mathsf{TC}^0$, a Transformer \emph{model} run with serial inference—such as autoregressive CoT or recurrence—extends beyond $\mathsf{TC}^0$ and can capture inherently serial computations (see \cref{sec:arch_vs_inf}). \cref{tab:nn_comparison} summarizes computational characteristics of common machine learning models.

\textbf{Diffusion models cannot solve inherently serial problems.} 
A key contribution of the present paper is a new theorem (\cref{thm:diffusion-tc0}) on the limitations of diffusion modeling. Although the step-by-step sampling of a diffusion model appear serial, we prove in \cref{sec:diffusion} that a diffusion model with a $\mathsf{TC}^0$ backbone remains in $\mathsf{TC}^0$, even with infinitely many sampling steps. 
Thus, despite their stepwise structure, diffusion models are incapable of solving inherently serial problems.

By linking the computational limits of modern ML models with the inherently serial structure of many key problems, we arrive at the following conclusion:

\begin{tcolorbox}[colback=gray!10, colframe=black, title=\textbf{Key Limitation of Modern Machine Learning Models}]
% \begin{theorem}[Informal]
% no need to be theorem
    MLPs, Transformers, SSMs, and diffusion models with $\mathsf{TC}^0$ backbones are provably incapable of solving general instances of inherently serial problems such as cellular automata evolution, many-body mechanics, sequential decision-making, and mathematical question answering.
% \end{theorem}
\end{tcolorbox}

\textbf{Limitations.} (1) Certain instances of serial problems may be trivial or admit parallel solutions; seriality holds only in the general case. (2) For problems beyond $\mathsf{P}$ (e.g., $\mathsf{NP}\setminus\mathsf{P}$), exponential computation—not serial computation—dominates. Thus, the Serial Scaling Hypothesis applies most directly to problems of practical real-world difficulty.

% By connecting the model's computation characteristics with the inherently serial nature of the key problems, we conclude a result that we believe is particularly relevant to the machine learning community:

% \begin{tcolorbox}[colback=gray!10, colframe=black, title=\textbf{Key Limitation of Modern Machine Learning Models}]
% \begin{theorem}[Informal]
%     Transformer-based, structured state-space models (SSMs), and diffusion models with $\mathsf{TC}^0$ backbones are provably incapable of solving general instances of problems such as cellular automata evolution, many-body mechanics, sequential decision-making, and mathematical question answering.
% \end{theorem}
% \end{tcolorbox}

% \textbf{Limitations.} (1) Some instances of inherently serial problems may be parallelizable or even trivial; only in the general case do they exhibit seriality. (2) For exceedingly hard problems in $\mathsf{NP} \setminus \mathsf{P}$ and beyond, serial computation may not be the dominant bottleneck due to the overwhelming exponential cost. Therefore, the Serial Scaling Hypothesis is most applicable to problems of practical real-world difficulty.

\subsection{Implications of The Serial Scaling Hypothesis}
\label{sec: SSH implications}

\textbf{For machine learning practitioners.} Many important real-world problems—such as cellular automata, many-body mechanics, and sequential decision-making—are inherently serial. 
Without sufficient serial computation, solving them with shallow models requires an exponentially large set of weights, which in turn demands exponentially large datasets to train. 
Neither is affordable in practice. 
This mismatch—\textit{solving inherently serial problems using shallow or parallel models}—helps explain why many ML systems generalize poorly beyond their training distributions \citep{torralba2011unbiased,zhang2017understanding,recht2019imagenet,liang2023holistic,mancoridis2025potemkin,zhang2025abench-physics}. 

Evidence from RL fine-tuning \citep{wang2025reinforcement,yue2025reinforcement,shao2025spurious} suggests that LLMs already acquire both shallow heuristics (fast but memorization-like \citep{nikankin2025arithmetic}) and deeper algorithms (slower but more general) during pretraining. 
However, if the inference-time compute budget is set too low, the model defaults to the shallow routines, producing fast but brittle behavior. 
Only with sufficient serial computation, the deeper routines can be executed resulting in better generalization.

\textbf{For task \& benchmark designers.} 
% For CA/complex systems, instead of predicting the full future configuration, estimate regime or complexity indicators (e.g., ordered vs. chaotic) computable from shorter runs or local statistics \citep{langton1990computation}.
Recognizing the inevitable cost to inherent seriality, serial problems may be reformulated into coarser or approximate problems to reduce serial depth to acceptable levels while remain practically useful.
For RL, truncated value functions cap the effective depth while retaining theoretical guarantees and practical utility \citep{park2025horizon1,sutton2018reinforcement,deAsis2019fixed-horizon}.
For reasoning, complexity theory shows that coarse decision problems can be tractable even when exact solutions are not—for example, primality is in $\mathsf{P}$ while factoring remains hard \citep{agrawal2004primes}.
% In both cases, the core strategy is not to solve the original problem in full fidelity, but to reformulate it into simpler, related problems that retain enough structure to provide useful guidance. 

In addition, benchmarks should have inherently serial problems as a separate category, to distinguish serial and parallel scaling in model performance.

\textbf{For model designers.} Solving these challenging real-world problems may require recurrent structures that increase serial computation alongside today’s predominantly parallel designs. However, recurrence and depth often amplify gradient variance \citep{bengio1994learning,pascanu2013difficulty} and L-Lipschitzness \citep{bartlett2017spectrally-normalized,fazlyab2019efficient}, making models harder to train. This motivates improved training techniques and novel architectures such as implicit gradients~\citep{wang2025hierarchical}, xLSTM~\citep{beck2024xlstm}, and test-time training~\citep{sun2024learning}.

\textbf{For hardware designers.} Massively parallel computing machinery, especially GPU clusters, enabled past progress in deep learning. Future progress in machine learning and computing in general\footnote{Amdahl's law~\citep{amdahl1967validity, gustafson1988reevaluating} shows that seriality is a hard upper bound on speedups achievable by parallelism.} also depends on progress in high-clockrate, sequential computing machinery, with reduced data movement overhead~\citep{kang2021processing-in-memory,kaur2024comprehensive}. A concrete example is wafer-scale hardware (e.g., Cerebras CS-3), which prioritizes extremely high on-chip memory bandwidth~\citep{gardnerOthercerebras}.

\subsection{Related Works}

Our hypothesis is similar to the ``parallelism tradeoff'' hypothesis—that all parallelizable models, irrespective of design, must necessarily fail to solve inherently serial problems~\citep{merrill2023expresssive, merrill2023parallelism}. Our work builds on classical complexity theory, including the ``depth-width tradeoff''~\citep{vishkin1985trade}, the ``work'' vs. ``depth'' contrast ~\citet{blelloch1996programming}, $\mathsf{P}$-completeness~\cite[Ch. 8]{greenlaw1995limits}, and computational irreducibility~\citep{wolfram2003new}.

While these ideas are well-established in complexity theory, our work brings them into the machine learning context—extending them to real-world tasks such as sequential decision-making and question answering, and highlighting the gap between these tasks and the limited serial capabilities of modern machine learning models.

\section{Serial Problems}
\label{sec:problems}

% In this section, we introduce inherently serial problems. We begin by showing that even a simple evolving system (\cref{sec:ca}) exhibits this property. We then demonstrate that a system governed by physical laws (\cref{sec:n-body}) also resists parallelization. We then show that inherently serial problems are quite generic, appearing throughout computational complexity theory (\cref{sec:p-complete}). Finally, we describe how sequential decision making (\cref{sec:rl}) and reasoning QA (\cref{sec:gpqa}) are bottlenecked by serial computation.

In this section, we highlight representative inherently serial problems: cellular automata (\cref{sec:ca}), physical systems (\cref{sec:n-body}), and complexity-theoretic hardness results (\cref{sec:p-complete}), as well as practical domains such as sequential decision making (\cref{sec:rl}) and reasoning QA (\cref{sec:gpqa}). 
These cases illustrate serial bottlenecks that readers may encounter in other domains.

\subsection{Cellular Automata}\label{sec:ca}

\begin{wrapfigure}{r}{0.3\textwidth}
    \vspace{-4.5em}
    \centering
    \begin{minipage}{1.0\linewidth}
        \centering
        \includegraphics[width=\linewidth]{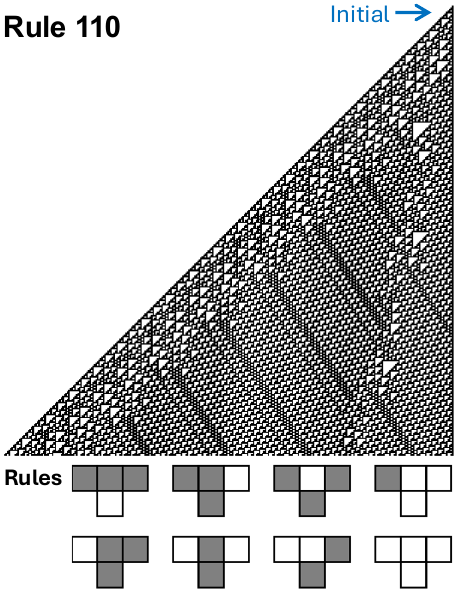}
        \caption{A single run of Rule 110. Given the top row, the CA evolves row-by-row according to the 8 rules.}
        \label{fig:cellular_automata_example}
    \end{minipage}
    \vspace{-3em}
\end{wrapfigure}

We begin with predicting the outcome of cellular automata (CA) \citep{sarkar2000brief, codd2014cellular}, shown in~\cref{fig:problem_types}(C), as a simple problem that is inherently serial. 
A CA is a grid of cells, each in one of finitely many states, updated in discrete steps according to a local rule based on its neighbors. Despite this simplicity, CA can exhibit behaviors ranging from predictable to complex dynamics such as Rule 110 (\cref{fig:cellular_automata_example}), which hints at inherent seriality of this problem.

\textbf{Cellular automata are inherently serial.}
Rule 110 has been proven Turing-complete~\citep{woods2009complexity}, so computing the state of a cell $x_i$ at row $N$ requires simulating each step in sequence, without shortcuts~\cite[p. 58]{greenlaw1995limits}. 
This is not unique to Rule 110: many CA problems are $\sf{P}$-complete~\citep{moore1997majority}, i.e., not efficiently parallelizable.

% One might evolve $k$ rows at once by \textbf{linear speedup}~\citep[Sec. 2.4]{papadimitriou1993computational}. However, it requires examining $2k+1$ cells and storing $2^{2k+1}$ rules that must be looked up, exhibiting the exponential parallel--serial tradeoff.

\textbf{What does this mean?} Even systems governed by just 8 simple rules can forbid shortcuts. 
% reducing serial computation often incurs an exponential parallel cost, showing that serial and parallel computation are not interchangeable.

\subsection{Many-Body Mechanics} 
\label{sec:n-body}

% The previous section shows that inherent seriality arises from a simple evolution of a system governed by just 8 rules. 

% Here, we examine a more realistic case: a many-body system under Newtonian mechanics.
% Consider $N$ particles moving in a multi-dimensional space under a force field or hard collisions.
% As shown in~\cref{fig:problem_types}(D), given the starting positions and momentum of particles at time $t=0$, predict the positions of the particles at some later time $t = T$ in a space with limited precision.\footnote{Limited precision is an assumption used in practical scientific settings. With unlimited precision, the problem becomes $\mathsf{PSPACE}$-hard, i.e. intractable.}

The previous section showed that inherent seriality can arise even in systems governed by a handful of local rules. We now turn to a more realistic setting: Newtonian many-body dynamics.

Consider $N$ particles evolving in $\mathbb{R}^d$ under forces and/or hard collisions. Given initial positions and momenta at $t=0$, the goal is to predict the particle positions at some later time $t=T$ in a \emph{finite-precision} representation, which reflects practical scientific computation; with unbounded precision, the prediction problem becomes $\mathsf{PSPACE}$-hard. This is the instance depicted in~\cref{fig:problem_types}(D).

% \textbf{Many-body mechanics are inherently serial.} The key argument is that Newtonian physics is rich enough to simulate any computation.
% \citet{fredkin1982conservative} recreate any Turing machine using nothing but billiard balls elastically reflecting off each other and walls---called a billiard-ball computer. \citet{moore1990unpredictability} proposes a similar recreation using smooth particle motion in a potential field in $\R^3$. Since the problem of simulating general Turing machines is {inherently serial}~\cite[p. 58]{greenlaw1995limits}, an algorithm that accurately simulates general physical systems can only be done step by step---except the special cases where the underlying Turing machine configurations are parallelizable by design.

\textbf{Many-body mechanics is inherently serial.}
Classical mechanics is expressive enough to emulate arbitrary computation. 
The billiard-ball computer of \citet{fredkin1982conservative} recreates any Turing machine using nothing but billiard balls elastically reflecting off each other and walls, and \citet{moore1990unpredictability} proposes a similar recreation using smooth particle motion in a potential field in $\R^3$. 
Since the problem of simulating general Turing machines is {inherently serial}~\cite[p. 58]{greenlaw1995limits}, an algorithm that accurately simulates general physical systems can only be done step by step.

\textbf{Seriality in video prediction.} Because videos capture inherently serial dynamics, such as collisions, forecasting the next frame from a sequence of $N$ frames is similarly inherently serial. This task underlies large-scale models for content creation~\citep{deepmind2025genie,brooks2024video,wan2025wan} and decision making~\citep{bar2025navigation,yang2024position}.

The computational difficulty in video prediction does not come from the cost of rendering a single next frame. If a model maintains a complete \textit{world state} (e.g., object positions and momenta), stepping that state forward is as efficient as applying a local physical law. The problem arises when losing track on the state: recomputing it requires replaying the chain of many-body mechanics from the last reliable observation, thus inherently sequential.

\begin{wrapfigure}{r}{0.3\textwidth}
    \vspace{-2.5em}
    % \vspace{-0.3em}
    \begin{minipage}{\linewidth}
        \centering
        \includegraphics[width=0.9\linewidth]{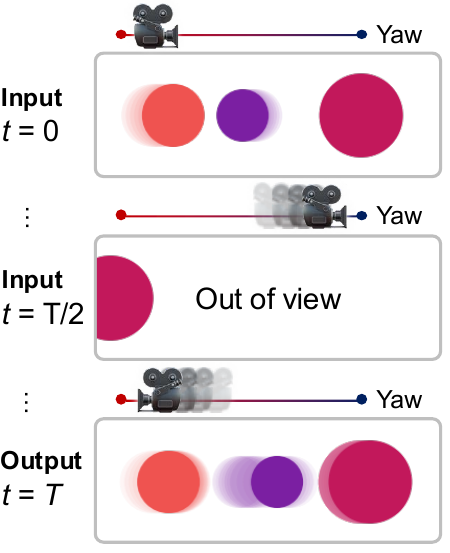}
        % \caption{To predict the next frame at time $T$, the input frames are given, however, the intermediate frames are effectively masked due to camera motion or occlusions.}
        \caption{Predicting the frame at time $T$. The intermediate frames may not be observable by camera motion/occlusion.}
        \label{fig:video_model}
    \end{minipage}
    \vspace{-4.5em}
\end{wrapfigure}

In real videos, objects frequently leave the field of view or become occluded, as illustrated in \cref{fig:video_model}. Current large-scale video predictors—typically Transformer-based and trained with next-frame objectives—are optimized only for immediate reconstruction. 
As a result, they are not encouraged to maintain the state of occluded objects. 
% Moreover, self-attention reinforces this bias: it distributes state information across many past latents and retrieves it when needed, as suggested by induction heads~\citep{olsson2022in-context}. 
% These tendencies
This can lead to an incomplete world state, effectively rendering the task inherently serial and no longer solvable in a single forward pass by the model. This may help explain why state-of-the-art video generators often produce physically inconsistent results~\citep{kang2024video,brooks2024video,wan2025wan}.

\textbf{What does this mean?} Tasks involving the modeling of system dynamics, including physical simulations and video prediction, likely require serial models to be efficiently solved. For video prediction, the model must maintain the full evolving world state.

\subsection{\texorpdfstring{$\mathsf{P}$}{P}-complete Problems}\label{sec:p-complete}

The inherently serial problems given previously have this in common: Given a problem statement of size $\mathcal{O}(n)$, it is immediately obvious how to construct a linear computational graph of length $\mathcal{O}(n)$, such that each node of the graph depends, and only depends, on the previous ones, and each step takes constant time. This matches the intuition of what being ``serial'' means.

Such problems have been formalized as $\mathsf{P}$-complete problems, of which the most prototypical one is the Circuit Value Problem (CVP) \citep{ladner1975circuit}: Given a Boolean circuit with a single output bit, specified as a graph of logic gates and their connections, and an input binary string, what is the output bit? The CVP is robust in that many variations remain $\mathsf{P}$-complete~\citep{greenlaw1995limits}.

The obvious solution is to perform a topological sort (takes $\mathcal{O}(n)$ time) on the graph of the logic gates, so that each gate depends only on the output values of the previous gates, then evaluate them serially. The CVP embodies the intuition of what an inherently serial problem should be, which is that each step easily depends on the previous steps, but skipping steps is difficult. Since any $\mathsf{P}$ problem is efficiently reducible to the CVP, it is $\mathsf{P}$-complete.

This gives us an intuitive rule: If the problem appears to have a linear computational graph similar to the CVP, then it is likely inherently serial. Indeed, this intuition guided us in proving the theorem in \cref{sec:rl}. Furthermore, an extensive list of problems is shown to be $\mathsf{P}$-complete \citep{greenlaw1995limits}, thus suggesting that inherently serial problems are common in the wild.
% it is widely believed that $\mathsf{P}$-complete problems are outside of $\mathsf{TC}$, and

\subsection{Sequential Decision Problems}
\label{sec:rl}

% As we have seen previously, inherent seriality arises from chains of non-trivial computations---whether rule-based logic or physical interactions---that cannot be compressed into fewer steps.
% Here, we discuss that the ``sequential'' nature of sequential decision problems—central to robotics and optimization---also imposes a similar requirements which favor serial solutions over parallel ones.

% Here, we discuss the ``sequential'' nature of sequential decision problems—central to robotics and optimization---where each action induces a non-trivial state transition resulting in inherent seriality.

The goal in sequential decision problems is to obtain an \textbf{optimal policy} $ \pi^*(s) = \arg\max_\pi J(\pi)$, where $J(\pi)$ 
is the expected return under policy $\pi$ in a Markov Decision Process (MDP) with finite horizon $N$, and discount factor $\gamma \in [0,1]$.
Therefore, given a state $s$, the policy outputs an optimal action $a$ from a limited possibilities (discrete action) or with finite precision (continuous action). There are two aspects in this: finding a policy, and executing a policy.

% Consider Acting in an environment with inherently serial dynamics (\cref{sec:ca,sec:n-body})

% \textbf{Sequential decision problems are inherently serial.} 
% In theory, it can be shown that there exists an MDP for which computing the optimal policy is inherently serial. As detailed in \cref{sec:rl_seriality}, the proof relies on a problem that is inherently serial to even approximate, and shows that selecting an optimal action is effectively approximating the problem, which is inherently serial by design. 

% However, in practice, it remains unclear whether this problem is likely serial.
% Below, we offer some intuition suggesting that it is.
% Consider \textbf{policy gradient} methods \citep{sutton1999policy,williams1992simple}, including popular variants like PPO \citep{schulman2017proximal}, widely used in applications such as LLM fine-tuning \citep{ouyang2022training}. The gradient is given by

\textbf{Executing a policy is inherently serial.} 
In \cref{sec:rl_seriality}, we construct certain MDPs and prove that in these, any \textit{approximately} optimal policy is inherently serial.

\textbf{Finding a policy is inherently serial.} Consider \textbf{policy gradient} methods \citep{sutton1999policy,williams1992simple}, including popular variants like PPO \citep{schulman2017proximal}, widely used in applications such as LLM fine-tuning \citep{ouyang2022training}.
% The gradient is given by
% \begin{equation*}
%     \nabla_\theta J(\pi_\theta) = \sum_{t=0}^{N-1} \gamma^t \; \mathbb{E}_{(s_t, a_t) \sim d^\pi_t} \left[ \nabla_\theta \log \pi_\theta(a_t|s_t) \; R_t \right],
% \end{equation*}
% where $d^\pi_t$ is the distribution of $(s_t, a_t)$ under policy $\pi_\theta$ at time $t$, and $R_t = \sum_{k=t}^{N-1} \gamma^{k-t} r_k$ denotes the return from time $t$ to the episode's end.

In policy gradient, the parameters of a policy model are improved via gradient descent on return estimates, which must be unbiased for convergence to the optimal policy\footnote{Convergence to suboptimal policies is still possible with biased return estimates~\citep{mu2024second-order, tian2023convergence}.}. Here, we show how an unbiased return estimation is inherently serial, implying that with parallel estimation the optimal policy is not guaranteed.

\begin{figure}[htbp]
    \vspace{-1em}
    \centering
    \begin{subfigure}{0.4\linewidth}
        \centering
        \includegraphics[width=\linewidth]{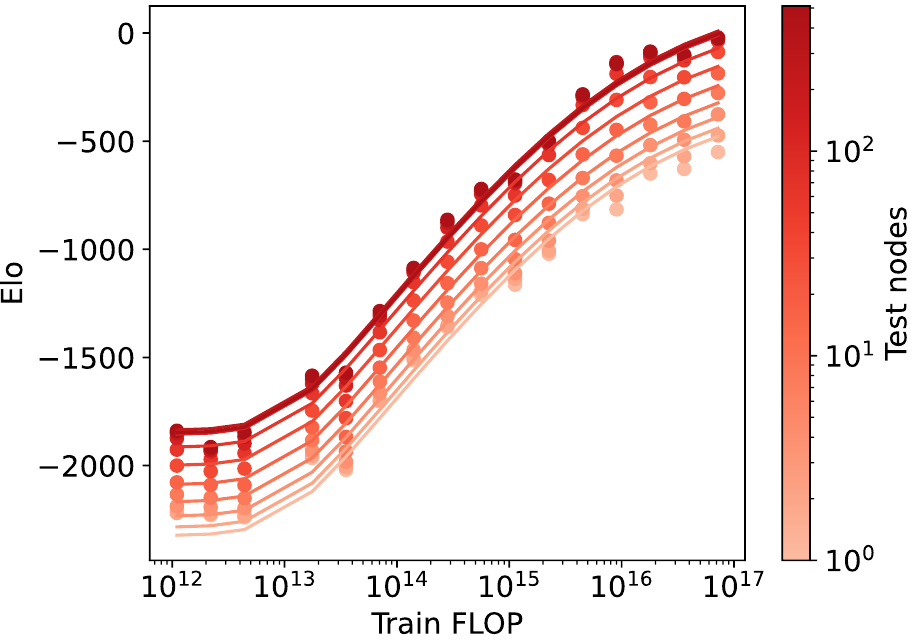}
        \caption{MCTS in Hex board game: Performance improves with more MTCS expansion nodes across all training regimes. Perfect play is only possible with test-time MCTS. Data from \citet{epoch2023tradingoffcomputeintrainingandinference}.}
        \label{fig:depth_width_and_hex}
    \end{subfigure}
    \hfill
    \begin{subfigure}{0.56\linewidth}
        \centering
        \includegraphics[width=\linewidth]{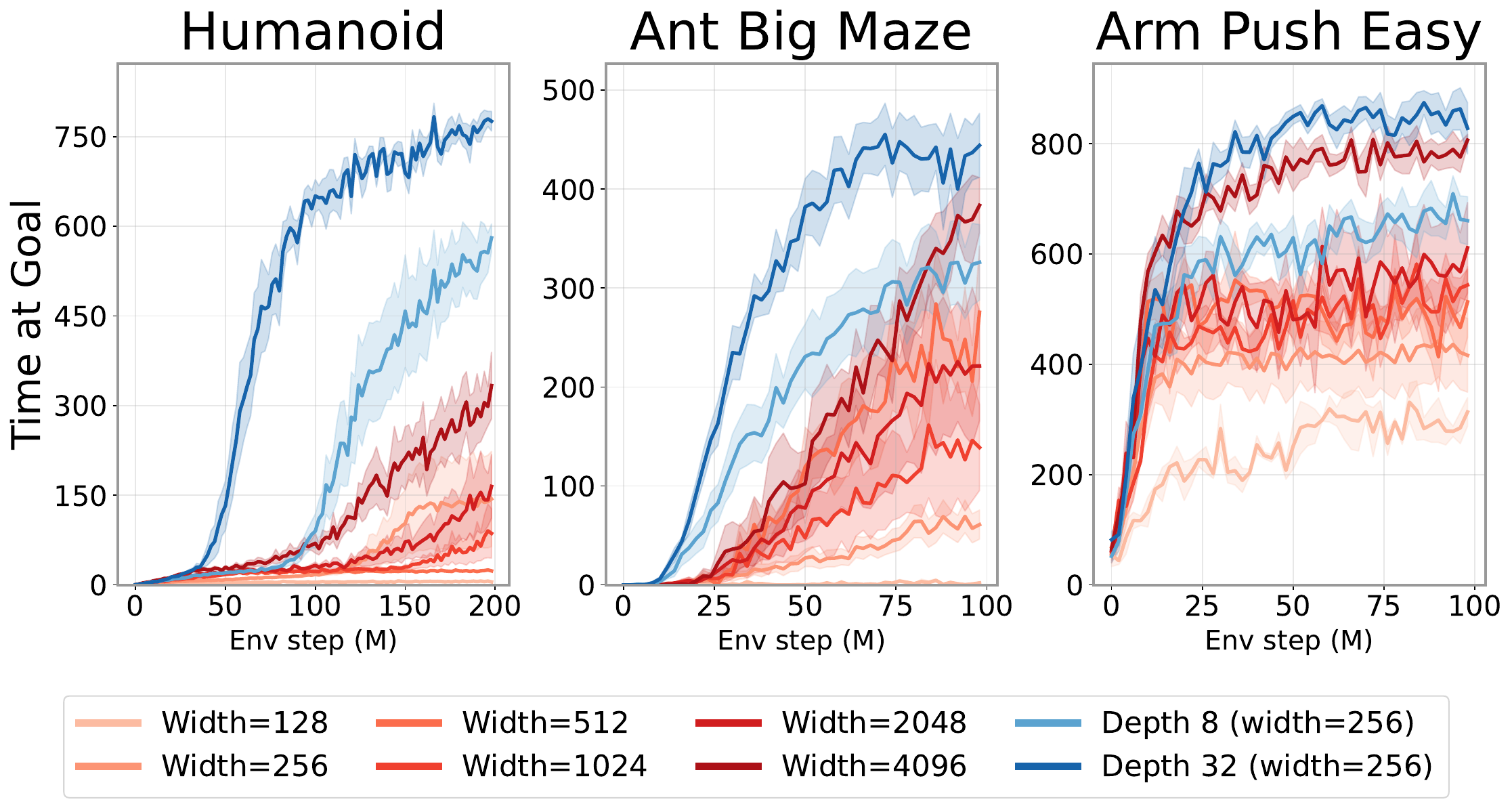}
        \caption{Actor \& critic networks' depth vs. width in locomotion \& manipulation. A 2× deeper network (8 layers, width 256) outperforms a 16× wider network (4 layers, width 4,096) in Humanoid, a locomotion task with the largest observation \& action spaces among the three. Data from \citet{kevin20251000}.}
        \label{fig:depth_vs_width}
    \end{subfigure}
    \caption{Empirical serial scaling on (a) Hex board game and (b) locomotion \& manipulation.}
    \vspace{-1em}
\end{figure}

If parallel return estimation is possible, one must be able to access state $i$ (and thus reward $r_i$) in fewer than a linear function of $i$ steps. 
This is unlikely in real-world MDPs, since even simple CA transition rules (\cref{sec:ca}) or physical interactions (\cref{sec:n-body}) prevent such shortcuts\footnote{As usual, there is an exponential tradeoff available. One can randomly generate a trajectory and check its consistency with the MDP and policy in parallel, but the expected number of trajectories needed before a consistent one is found grows exponentially with length $n$.}.
While scaling parallel computation via aggregating multiple trajectories can accelerate convergence by reducing variance~\citep[p. 93]{sutton2018reinforcement}, accurate return estimation still requires serial computation. 

This inherent seriality motivates model-based reinforcement learning, where returns are computed by unrolling an internal model step by step. 
For instance, Monte Carlo Tree Search (MCTS)—which increases serial computation via tree expansion and reduces return-estimation bias—has achieved superhuman-level performance in diverse board games \citep{silver2016mastering, silver2017mastering, schrittwieser2020mastering}. 
\citet{epoch2023tradingoffcomputeintrainingandinference} demonstrate consistent improvements in Hex from increasing MCTS expansion nodes (see~\cref{fig:depth_width_and_hex}). 
Even in model-free RL, adding serial computation with deeper networks significantly outperform wider (more parallel) ones on locomotion tasks \citet{kevin20251000} (see~\cref{fig:depth_vs_width}).
% The benefits of additional serial computation are not limited to model-based methods: in model-free RL, \citet{kevin20251000} show that deeper (more serial) networks significantly outperform wider (more parallel) ones on locomotion tasks (see~\cref{fig:depth_vs_width}).

\textbf{What does this mean?} Parallel computation cannot substitute for serial computation in RL. Without sufficient serial computation, the optimal policy is not guaranteed.

\subsection{Reasoning Question Answering}
\label{sec:gpqa}

\begin{wrapfigure}{r}{0.4\textwidth}
    \vspace{-4.5em}
    \centering
    \includegraphics[width=\linewidth]{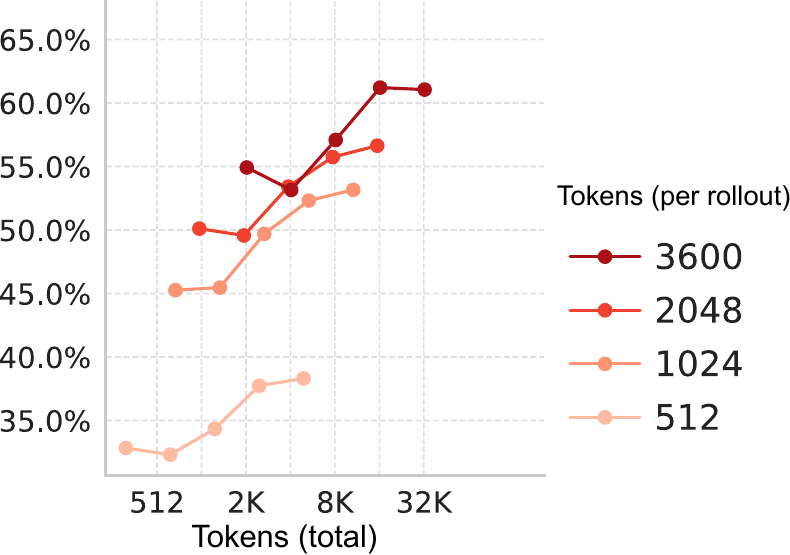}
    \caption{Average benchmark scores over 4 math benchmarks for longer reasoning chains (in different colors) vs. majority voting (as dots along each line). Data from \citet{aggarwal2025l1}.}
    \label{fig:aggrawal-scaling}
    \vspace{-1.5em}
\end{wrapfigure}
% AMC, MATH, AIME, and Olympiad-bench

Instead of going through an MDP as in RL, math question answering exhibits seriality through step-by-step logical reasoning, irrespective of the answer length, before arriving at the solution. As shown in~\cref{fig:problem_types}(E), given a question as input tokens, the model autoregressively generates the solution, each step from a limited set of tokens.
% —generating each token from a limited set of possibilities. % Note that the inherent seriality is a property of the problem, not to be confused with the seriality due to autoregressive inference.

% Instead of planning through a complex MDP as in sequential decision problems, advanced math question answering exhibits inherent seriality through logical step-by-step reasoning. 

\textbf{Math QA is likely inherently serial.} 
Solving grade-school mathematics, GSM8K \citep{cobbe2021training}, which has been used to benchmark reasoning capabilities in LLMs, can be formalized as dependency graphs \citep{ye2024physics}. A solution is to traverse the graph in topological order and perform necessary arithmetic operations sequentially \citep{ye2024physics}. 
This resembles the Arithmetic Circuit Value Problem \cite[p. 124]{greenlaw1995limits}, a generalization of the standard CVP to arithmetic operations. Like CVP, this problem is $\mathsf{P}$-complete—that is, inherently serial. Since seriality arises even in simple math QA, it likely extends to advanced benchmarks such as AIME and Olympiad-bench, where finding the correct approach is hard.

In \textbf{mathematics QA}, as shown in \cref{fig:aggrawal-scaling}, \citet{aggarwal2025l1} demonstrate that sequential scaling with longer reasoning chains consistently outperforms parallel scaling via majority voting controlled for the same token budget. This pattern holds across mathematical datasets of varying difficulty, including AMC, MATH, AIME, and Olympiad-bench.

Similarly, \textbf{science QA} also appears to exhibit inherent seriality. \citet{muennighoff2025s1} report that in GPQA Diamond \citep{rein2024gpqa}, a QA benchmark on PhD-level science, sequential scaling yields consistent accuracy improvements, only limited by the model's context window, and is significantly more efficient than parallel scaling (via majority voting), which plateaus.

\textbf{What does this mean?} Complex question answering similar to math QA likely requires constructing answers step by step over a computation graph, and thus inherently serial.

\section{Diffusion model's computation is not serial}
\label{sec:diffusion}

\begin{wrapfigure}{r}{0.35\textwidth}
    \vspace{-2em}
    \centering
    \includegraphics[width=\linewidth]{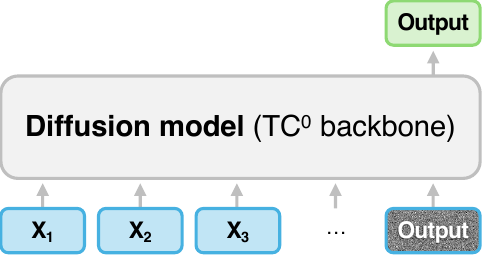}
    \caption{The backbone network takes as input a sequence of tokens, and a single noisy output token. This is repeated until the output token is fully denoised.}
    \label{fig:diffusion_lm}
    \vspace{-1.8em}
\end{wrapfigure}

% Having examined inherently serial problems, we turn to consider modern ML models and their capacity for seriality. The capacity for serial computation has been characterized for many architectures, including MLPs, Transformers, SSMs, RNNs, and CoT~\citep{hao2022formal, strobl2024formal, bhattamishra2020ability, hahn2020theoretical, zhou2023algorithms, sanford2024understanding}. We review this literature in~\cref{sec:nn}. 
The Serial Scaling Hypothesis provide a lens to look at machine learning from serial--parallel lens. We have examined serial machine learning problems. We now turn to a diffusion models~\citep{ho2020denoising,song2021score-based,song2019generative}, widely used in image/video generation~\citep{brooks2024video} and other vision tasks such as depth estimation~\citep{saxena2023surprising}, and language modeling tasks~\citep{li2022diffusion, nie2025large, arriola2025block}.
While commonly thought as a serial model, we show here for the first time that they are not. 
% Here, we focus on diffusion models~\citep{ho2020denoising,song2021score-based,song2019generative}. These models are widely used in image/video generation ~\citep{dhariwal2021diffusion, brooks2024video}, other vision tasks such as depth estimation~\citep{saxena2023surprising, duan2024diffusiondepth, ke2024repurposing}, and language modeling tasks~\citep{li2022diffusion, nie2025large, arriola2025block}.

Consider a problem with input $x_1, \dots, x_N$ and fixed output $x_{N+1}$ (as in~\cref{fig:problem_types}). A diffusion model uses a backbone neural network $\theta$ which takes the inputs and a noisy version of $x_{N+1}$, denoising it over $T$ steps until it becomes the output (see~\cref{fig:diffusion_lm}). This models the conditional distribution $p_\text{truth}(x_{N+1} \mid x_1, \ldots, x_N)$ or concisely $p_{\text{truth}}$.

While the number of denoising steps $T$ is scalable---more denoising steps for finer approximation of $p_\text{truth}$, in practice, diffusion models converge rapidly~\citep{ma2025inference-time}. 
For instance, image generation plateaus at 300 steps~\citep{nichol2021improved}, depth estimation shows little difference between 5 and 100 steps~\citep{ravishankar2024scaling}, and language modeling yields similar perplexity for 32 vs. 1024 steps~\citep{austin2021structured}. With distillation, 1–4 steps suffice without much loss~\citep{yin2024improved, liu2024instaflow, song2023consistency, lin2024sdxl-lightning, salimans2022progressive}.
Such rapid convergence suggests that the effective computation depth of diffusion models is low. It would be surprising if the underlying computation were truly serial.

\textbf{Diffusion models with a $\mathsf{TC}^0$ backbone can only solve problems in $\mathsf{TC}^0$.} Previous work showed that a backward diffusion process converges to $p_0$ at the rate $TV = \mathcal{O}(d/T)$~\citep[Thm. 1]{li2024d},
where $TV$ is the total variation between $p_0$ and the denoising $p_{\theta, 0}$, and $d$ is the intrinsic dimension of $x_{N+1}$. Building on top of it, we obtained the following theorem, the formal statement and proof of which are in Appendix \ref{sec:diffusion_appendix}:

\begin{theorem}[Informal]
If a problem can be solved by a diffusion model with a $\mathsf{TC}^0$ backbone with high probability with infinite diffusion steps, then the problem itself is in the parallelizable class $\mathsf{TC}^0$.
\end{theorem}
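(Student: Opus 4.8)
The plan is to use the convergence rate $TV = \mathcal{O}(d/T)$ to collapse the ``infinite-step'' diffusion process to a \emph{constant} number of steps, after which a routine composition argument places the entire computation in $\mathsf{TC}^0$. The decisive observation is that the $d$ appearing in the rate is the dimension of the \emph{single} output token $x_{N+1}$, which is fixed by the problem specification and does not grow with the input length $N$. Therefore, to push the total variation below any task-specified precision $\epsilon>0$, it suffices to run $T = \mathcal{O}(d/\epsilon)$ denoising steps---a number independent of $N$. The ostensible unboundedness of the diffusion schedule is thus never actually needed.

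First I would make the ``solves with high probability'' hypothesis precise for a decision problem: the infinite-step output distribution $p_0$, once decoded to a yes/no answer, is correct with probability at least $\tfrac{1}{2}+\delta$ for some constant margin $\delta>0$. Since total-variation distance upper-bounds the discrepancy in the probability of \emph{any} event---in particular the event ``the decoded answer is correct''---the $T$-step distribution $p_{\theta,0}$ obtained from \citep[Thm. 1]{li2024d} with $T=\mathcal{O}(d/\delta)$ is correct with probability at least $\tfrac{1}{2}+\delta-\epsilon \ge \tfrac{1}{2}+\tfrac{\delta}{2}$ upon setting $\epsilon=\delta/2$. Hence a constant number of steps already decides the problem with a constant margin.

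Next I would invoke closure of $\mathsf{TC}^0$ under constant-depth composition. Each denoising step is one forward pass of the backbone $\theta\in\mathsf{TC}^0$, i.e. a circuit of constant depth and $\mathsf{poly}(N)$ width; stacking the $T=\mathcal{O}(1)$ copies---feeding each step's output, together with the frozen conditioning tokens $x_1,\dots,x_N$ and fresh noise, into the next---yields a circuit of depth $T\cdot\mathcal{O}(1)=\mathcal{O}(1)$ and width still $\mathsf{poly}(N)$. $\mathsf{L}$-uniformity survives because a logspace transducer can emit the composite schematic by running the backbone's generator $T$ times and rewiring the inter-step edges within $\mathcal{O}(\log N)$ space. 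The injected noise is treated as auxiliary circuit input; since MAJORITY is native to $\mathsf{TC}^0$, polynomially many independent parallel runs can be combined by a single threshold gate to amplify the constant margin, placing the decision problem in (randomized) $\mathsf{TC}^0$, in exact correspondence with the probabilistic hypothesis.

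The hard part will be justifying that the constant hidden in $\mathcal{O}(d/T)$ is genuinely \emph{uniform in $N$}. The cited bound depends on regularity of the score field---Lipschitz constants, second moments of $p_{\text{truth}}$, and the backbone's score-estimation error---and I must verify these remain bounded as the conditioning length $N$ grows rather than degrading and secretly forcing $T$ to scale with $N$. In the limited-precision, fixed-output-dimension regime of \cref{fig:problem_types} this is plausible, since the denoised variable lives in a bounded, constant-dimensional space and a $\mathsf{TC}^0$ backbone emits scores of bounded magnitude and Lipschitz constant; nevertheless, checking that the hypotheses of the convergence theorem hold with $N$-independent constants---and cleanly separating the discretization error governed by $d/T$ from any residual, possibly $N$-dependent score-approximation term---is the delicate step on which the whole reduction rests.
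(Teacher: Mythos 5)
Your first half is exactly the paper's argument: invoke the $TV \le \mathcal{O}(d(\log T)^3/T)$ convergence bound of \citet{li2024d}, observe that $d$ is the dimension of the single output token and hence fixed, conclude that a constant $T$ (independent of $N$) suffices to preserve a constant decision margin, and then compose $T = \mathcal{O}(1)$ forward passes of the $\mathsf{TC}^0$ backbone into a constant-depth circuit. You also correctly flag the score-error issue as the weak point: the paper resolves it by \emph{assumption}, requiring the backbone to compute the score of $\rho_0$ exactly, precisely because the full bound contains an extra term $c\,\epsilon_{\text{score}}\sqrt{\log T}$ that grows with $T$ and would otherwise wreck the constant-$T$ argument. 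So your instinct about where the reduction is fragile matches the paper's own hypothesis structure.

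The genuine gap is at the end. You stop at ``(randomized) $\mathsf{TC}^0$'': noise as auxiliary input, polynomially many parallel runs, one MAJORITY gate to amplify the margin. But the theorem claims membership in $\mathsf{TC}^0$ as a class of \emph{deterministic} circuits, and amplification alone does not get you there --- you still have a circuit whose correctness is a probabilistic statement over the noise inputs. The paper's second step is an Adleman-style derandomization: amplify so that the failure probability on any fixed input is below $\#\text{vocab}^{-n}$, take a union bound over all $\#\text{vocab}^n$ inputs of length $n$, and conclude that some \emph{fixed} choice of seeds $(s_1,\dots,s_{k(n)})$ is simultaneously correct on every input; these seeds are then hard-wired into the circuit as advice. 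This is also why your claim that ``$\mathsf{L}$-uniformity survives'' is wrong, or at least unjustified: the seeds are only shown to \emph{exist}, and finding them may take exponential time, which is exactly why the paper's formal result (Appendix \ref{sec:diffusion_appendix}) is stated for \emph{non-uniform} $\mathsf{TC}^0$. Your construction as written proves a weaker statement than the one claimed; adding the union-bound/advice step closes it, at the stated cost of uniformity.
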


\textbf{What does this mean?} Diffusion models only provide a \textit{constant} amount of additional serial computation. The above theorem precludes the use of diffusion models as a scalable means of increasing serial computation.
Unlike CoT, which genuinely adds serial compute (see~\cref{sec:nn}), diffusion models do not. This may explain the empirical mediocre performance of diffusion language modeling as output length increases~\citep{austin2021structured,lou2024discrete,gulrajani2023likelihood-based,sahoo2024simple}.

\textbf{Limitation.} The theorem doesn't apply: (1) If solution space grows in intrinsic dimension. (2) If the backbone is either poorly trained or trained under a different objective. This leaves open the possibility of serial-compute-scalable generative models beyond score-based diffusion.

% Everything before this line should compile to within 9 pages.

\section{Acknowledgements}
% \todo{shouldn't include because it will give out names.}
% We thank William Merrill and Anant Sahai for their insightful feedback and technical corrections, and Angjoo Kanazawa, Justin Kerr, Amil Dravid, Yossi Gandelsman, and Yuatyong Chaichana for feedback that helped improve the presentation of this paper.
We thank Alexei Efros, whose critiques of deep learning theory helped ground us and motivated us to make this paper friendlier to non-theoreticians. We are grateful to William Merrill and Anant Sahai for their invaluable technical feedback and corrections that significantly strengthened our theoretical analysis. We also thank Angjoo Kanazawa, Justin Kerr, Amil Dravid, Yossi Gandelsman, and Yuatyong Chaichana for their thoughtful comments and suggestions that helped improve the clarity and presentation of this work. This work is partially supported by the ONR MURI N00014-21-1-2801. YL was supported by a gift to the Center for Human-Compatible AI at Berkeley from Coefficient Giving (formerly Open Philanthropy) during the development of this work.

\section{Reproducibility}

No new data or code has been used to support this paper. Some new theorems are introduced in this paper, and their proofs are in the appendix.

\bibliographystyle{iclr2026_conference}
\bibliography{merged}

%%%%%%%%%%%%%%%%%%%%%%%%%%%%%%%%%%%%%%%%%%%%%%%%%%%%%%%%%%%%

\clearpage
\appendix

\section{Limitations}

Our conclusions rely on the widely-held but unproven assumption that $\sf{TC} \subsetneq \sf{P}$ (\cref{as:tc_p}). If this assumption is disproven, our formalization of the serial/parallel dichotomy would be invalid. Moreover, our theoretical arguments apply only to the general case; although the intuition may extend to practical settings, such generalization is not guaranteed. In particular, the average-case complexity~\citep{bogdanov2006average} of many problems could differ or even allow parallel solutions. 

For especially hard problems (e.g., $\mathsf{NP} \setminus \mathsf{P}$ and beyond), exponential cost—rather than serial depth—may dominate as the primary bottleneck. Thus, our focus is on real-world problems of practical difficulty.

Our theorem on diffusion models applies only when the output dimension remains fixed. If it grows with problem size, the result may not hold—though current empirical evidence in language modeling does not suggest strong serial scaling. The theorem also assumes a well-trained backbone using a score-matching objective, and may not apply to poorly trained models, especially early in training.

Finally, beyond theoretical and circumstantial evidence, more empirical work is needed to quantify the degree of seriality in practice—particularly the benefits of increased serial compute. A promising direction is to benchmark real-world tasks under varying ratios of serial and parallel computation.

\section{A Brief History of Scaling}

\begin{wraptable}{r}{0.42\textwidth}
    \vspace{-1.2em}
    \centering
    \small
    \setlength{\tabcolsep}{6pt}
    \renewcommand{\arraystretch}{1.1}
    \begin{tabular}{l|cc}
      \hline
       & \textbf{Serial} & \textbf{Parallel} \\
      \hline
      \textbf{Scaling} & Depth & Width \\
      \textbf{Architecture} & RNN & Transformer \\
      \textbf{Algorithm} & RL & Imitation learn. \\
      \textbf{Hardware} & CPU & GPU \\
      \hline
    \end{tabular}
    \caption{Examples of serial and parallel approaches.}
    \label{tab:seq-par}
    \vspace{-1.5em}
\end{wraptable}

The success of modern machine learning has been driven by scaling—bigger models, more data, and especially more parallel compute. Hardware has shifted from CPUs to massively parallel GPUs; architectures have moved from RNNs to highly parallelizable Transformers;
% , which are highly parallelizable;
and algorithms increasingly exploit parallelism for efficiency.

However, this focus on parallel scaling has a blind spot: it assumes that parallel and serial computation are interchangeable. In reality, for many problems, only increasing serial compute—allowing models to perform more sequential steps—yields further progress. This is especially true for tasks with inherent temporal or causal dependencies. As we show in this paper, recognizing the limits of parallel scaling and the necessity of serial computation is crucial for the next phase of progress in machine learning.

\section{Potential Misconceptions}

\subsection{Architecture vs. Inference Methods}
\label{sec:arch_vs_inf}

One may be confused by the statement that ``Transformers are not serial,'' followed by ``Chain of Thought is serial.'' This appears to be a contradiction. It is not, though it may appear so due to our omission of words. The first statement really should be ``If you fix a Transformer's parameters and give it problems of size $\mathcal{O}(n)$, but only run it for $\mathcal{O}(1)$ forward passes, then you cannot solve inherently serial problem-families.''. Similarly, the second statement really should be ``If you fix a Transformer's parameters and give it problems of size $\mathcal{O}(n)$, and also run it for $\mathcal{O}(n)$ forward passes, then you can solve inherently serial problem-families.''.

This is a special case of a general phenomenon: people confuse architecture and inference methods, meshing them all together like two lumps of clay gently melding under the desert sun. This is unfortunate but understandable because, in practice, architecture and inference couple together strongly, and this confusion is also encouraged by things like ``architecture--deployment co-design''. The fact is that the abstractions leak. How you use a model for inference depends on its architecture. How you design a new model architecture depends on what you expect its inference method to be. This is a good thing, but it may cause people to confuse architecture and inference.

Fundamentally, we consider an ``architecture'' to be a \textbf{class} that can have \textbf{instances}. For example, \verb|gpt2| is an instance of the class \verb|CausalTransformer|. Each class-instance should be imagined as an ``atom of computation''. An atom has mass and volume. An atom can be copied and put into many molecules. What an atom cannot do is be divided or changed. Similarly, an instance of the ``Transformer architecture'' is an ``atomic'' Transformer. This atomic Transformer does have ``subatomic structures'', such as its individual layers. However, since we are not performing complicated model surgeries, such as taking off its head, reading out the tensor from its middle layers, etc., we can ignore all of its subatomic structures and assume it is simply atomic, perfectly indivisible, unchangeable, eternal. If I give you an atomic Transformer, and you want to do something with it, you \textit{must} perform exactly one single forward pass, because you cannot stop half-way (since you don't have access to its subatomic structures). This has a fixed seriality. It may be 48 layers or 96 layers, and you can pick bigger and bigger atoms from the same class with no upper bound. But for a \textit{given} particular instance from the class, the seriality is $\mathcal{O}(1)$, and cannot possibly grow like $\mathcal{O}(n)$.

Still, even though the atomic Transformer is unchanging, it can be used in many molecules, and the molecules can change. It can be used in a Chain of Thought molecule, and this molecule would grow like a polymer, taking on a length of $\mathcal{O}(n)$ in reaction to a length $\mathcal{O}(n)$ input.

In short, a single architecture may be used on its own in a fixed number of forward passes, or a number of forward passes growing linearly with the problem size, or in a tree search, a beam search, etc. Inference methods are separate from model architecture. In this paper, we discuss both architectures and inference methods. We have found that Transformers, SSMs, and diffusion models are all not serial when the inference method is just a single forward pass, but can become serial when the inference method allows for more serial compute, such as Chain of Thought inference.

\subsection{SSM is not RNN}

We class SSMs as \textit{not} inherently serial, while RNNs as inherently serial. This is not a mistake, and in fact, a very valuable illustration of how an architecture can have an illusion of seriality.

First, in what sense is RNN serial, while a Transformer is not serial? Recall the title of the original paper on Transformers: ``Attention is all you need'' \citep{vaswani2017attention}. What did they mean by ``all you need''? What was removed? Back in 2017, the standard sequence transduction model was a pair of RNNs connected in the middle by the attention mechanism. The encoder RNN processes the input sequence one token at a time, then the decoder RNN produces the output sequence, again one token at a time. The key problem is that, while both an RNN and a Transformer must decode tokens one-by-one, an RNN must encode tokens \textit{also} one-by-one, whereas a Transformer can encode tokens all-in-one-go. The animating spirit of the Transformer architecture was to remove everything that stands in the way of all-in-one-go, but preserve everything that allows RNNs to work so well.

So they removed recurrence and preserved attention. Their title meant ``attention is all you need (and recurrence is not)''. This is exactly what our paper is arguing against: attention is not all you need, and inherently serial processing is necessary. It can be done in many ways: recurrence in architecture, CoT in inference strategy, or some other method, but it cannot be avoided.

So now we come to SSM. SSM is not RNN because it is not recurrent in an inherently serial way. It was designed with the same spirit as the Transformer and suffers the same issue. 

Concretely, consider the problem of ingesting an input sequence: $x_1, x_2, \dots, x_n$, and the task is to output the next token $x_{n+1}$. For a Transformer with 96 layers to accomplish the task, it does so by 96 sequential steps. Each step is very wide but not deep. In contrast, an RNN with 96 layers must process $x_1$ first, then $x_2$, and so on. It requires $96n$ sequential steps that cannot be done in parallel. Suppose one attempts to do them ``in parallel'', then there is an immediate problem: To run the RNN on $x_4$, one needs the internal states of the RNN just after it has processed $x_3$, and to do that, it needs to have processed $x_2$, etc. Abstractly, the operation of an RNN looks like:
\begin{equation*}
    f_\theta(x_n, f_\theta(x_{n-1},\dots, f_\theta(x_2, f_\theta(x_1, s_0)) )),
\end{equation*}
where $s_0$ is the initial internal state.

In short, because the internal states of an RNN change nonlinearly and unpredictably, one cannot skip the steps. This allows RNN to solve inherently serial problems without chain of thought, but at the price of taking wall-clock time $\mathcal{O}(n)$ to even output its first token.

The animating idea of SSM is precisely to remove the inherent seriality of RNN. The idea is that if the internal states change linearly and predictably, then one \textit{can} skip the steps. Concretely, the operation of an SSM is of the form:
\begin{equation*}
    \text{output}_k = f_\theta(M^{k-1} x_1 + M^{k-2} x_2 + \dots + M^1 x_{k-1}, x_{k})
\end{equation*}
since its internal states change linearly in a data-independent way. This means that to run an SSM on input $x_4$, one does not need to know the internal state of the SSM after processing $x_3$. This allows it to be just as parallel as a Transformer, and thus just as lacking in seriality. % There are more advanced designs such as Mamba, where the internal states are not updated by repeated matrix multiplications, and thus not linear, but still designed so that the states can be computed in parallel, rather than one after another.

\subsection{Training vs. Inference}

Throughout most of the paper, the main contrast is the parallel vs. serial computation contrast. We do not mean this to be a contrast between the parallel phase of training vs. the autoregressive phase of inference, which is how modern GPT-like transformers are produced and used.

Concretely, consider initializing a language model with 96 layers. The training algorithm samples a chunk of text (32,768 tokens long) from the training corpus. The model then performs a single forward inference simultaneously on all tokens, such that it only takes 96 steps of time—one step per layer. Now, during inference, the model must then generate one token at a time, so the same 32,768 tokens would take 3,145,728 steps of time, which is a vast expansion.

Yet, this is not the main focus of the paper. The paper mostly talks about the theoretical and empirical results concerning \textit{learned and frozen} models. The paper does talk about training, but only occasionally and informally, by sketching out intuitive analogies with inference, without rigorous theoretical justification.

This is not due to an intention to mislead. Indeed, we recognize that there is a parallel training vs. autoregressive inference contrast, and we have attempted to get some theoretical handle on this contrast. We failed. Because learning theory is extremely difficult, we could neither find theorems proven by people that came before us, nor prove theorems ourselves. Therefore, we stayed within inference. Fortunately, the parallel vs. serial contrast is already sharp even within inference, with both clear theoretical and empirical results, allowing us to write the paper.

We believe strongly that in the future, the parallel vs. serial contrast will also be shown for training, and leave this as work for the near-posterity.

\section{\texorpdfstring{$\mathsf{TC}^0$}{TC⁰} and \texorpdfstring{$\mathsf{TC}$}{TC} Classes}
\label{sec:tc}

\begin{definition}
    Let $i \in \mathbb{N}$. A decision problem $L \subseteq \{ 0, 1 \}^*$ is in the complexity class $\mathsf{TC}^i$ if there exists a family $\{ C_n \}_{n \in \mathbb{N}}$ of Boolean circuits such that:
    \begin{itemize}
        \item Each circuit $C_n$ decides whether $x \in L$ for all $x \in \{ 0, 1 \}^n$.
        \item Each circuit $C_n$ has size polynomial in $n$, i.e., $| C_n | = \mathcal{O} (n^k)$ for some constant $k$.
        \item Each circuit has depth $\mathcal{O} (\log^i n)$.
        \item The circuit gates are of unbounded fan-in AND, OR, NOT, and MAJORITY gates. A majority gate outputs 1 if more than half of its inputs are 1.
        \item The family $\{ C_n \}$ is $\mathsf{L}$-uniform, where $\mathsf{L}$ stands for ``logspace''. This means that there exists a deterministic Turing machine that, on input $(\underbrace{11\cdots 1}_{n\text{ repeats}}, i)$, outputs the $i$th bit of the description of $C_n$ using a working tape of length only $\mathcal{O}(\log n)$.
    \end{itemize}
    In addition, a decision problem $L \subseteq \{ 0, 1 \}^*$ is in the complexity class $\mathsf{TC}$ if it is in the union of all classes $\mathsf{TC}^i$ over $i \in \mathbb{N}$, that is, 
    \begin{equation}
        \mathsf{TC} = \bigcup_{i \in \mathbb{N}} \mathsf{TC}^i.
    \end{equation}
\end{definition}

In the literature, $\mathsf{TC}$ is also called $\mathsf{NC}$, or ``Nick's Class.''

Define the $\mathsf{L}$-uniform many-one reduction relation $\leq_m^{\mathsf{L}}$ as follows: Given two languages $L, L'$, that is, two sets of binary strings, we say $L \leq_m^{\mathsf{L}} L'$ if and only if there exists a function $f$ such that $x \in L$ if and only if $f(x) \in L'$, and the function $f$ is computable by a deterministic Turing machine with a logspace working tape.

A decision problem is $\mathsf{P}$-complete if and only if there exists a Turing machine that decides it in polynomial time, and any problem decidable in polynomial time can be $\mathsf{L}$-uniformly many-one reduced to it.

\section{Serial capabilities of modern machine learning methods}
\label{sec:models}
\label{sec:nn}

Empirically, we see a tradeoff in modern machine learning architectures. Some have highly parallelizable computation graphs, such as MLPs, Transformers, and SSMs, but do not solve inherently serial problems. Others can solve inherently serial problems, but cannot be parallelized, such as RNNs, repeating layers, and CoT.

In terms of computational complexity, MLPs have been formalized as a family of Boolean circuits with threshold gates called $\sf{TC}^0$~\citep{parberry1988parallel}. Consider a family of MLPs with constant $\mathcal{O}(1)$ depth, $\poly(n)$ neurons per layer, and $\mathcal{O}(1)$ numerical precision. A problem is in $\sf{TC}^0$ if and only if it is decidable by one forward pass through an MLP under this setup.

It is widely suspected that $\sf{TC}^1$ is strictly larger than $\sf{TC}^0$. Therefore, any $\sf{TC}^1$-hard problem requires computations deepening at a rate of at least $\mathcal{O}(\log n)$. We may consider the problem as being serial in a weaker, logarithmic sense. A simple example of $\sf{TC}^1$ is the \textbf{word problem} of the symmetric group on 5 elements $S_5$: Given $g_1, g_2, \dots, g_n \in S_5$, find $\prod_i g_i$~\citep{liu2022transformers}. Intuitively, this is because $S_5$ is not a solvable group, and therefore there is no better way to multiply its elements than via a binary tree, which has $\mathcal{O}(\log n)$ layers. Attempting to perform this with just $\mathcal{O}(1)$ layers would effectively require one to memorize the entire $S_5^n \to S_n$ multiplication table, which scales exponentially as $e^{n \log (120)}$, an exponential depth-width trade-off.

To solve the word problem of $S_5$ by a Transformer in one forward pass, we present it with input tokens $g_1, g_2, \dots, g_n, y$, where $y$ is a special token. The Transformer's output at $y$ is read out as the answer. Similarly, we may solve such a problem by presenting the same input tokens to a State Space Model (SSM) or a Recurrent Neural Network (RNN). Since a Transformer with $\mathcal{O}(1)$ layers and $\poly(n)$ dimensions can only solve problems in $\sf{TC}^0$ \citep{merrill2023parallelism} in one forward pass, it cannot solve the word problem of $S_5$ that lies outside $\sf{TC}^0$.

On the contrary, for an RNN, we can write the multiplication table of $S_5^2 \to S_5$ directly into its weights, so it can solve the problem by unrolling for $\mathcal{O}(n)$ recurrence steps with $\mathcal{O}(1)$ layers and $\mathcal{O}(1)$ dimensions. Intuitively, the hidden states of an RNN keep track of the progress of multiplication as it performs the forward passes. However, RNN's recurrence state dependency renders it non-parallelizable. Several families of SSMs were developed as a compromise that still have recurrence on hidden states, like an RNN, while making forward passes parallelizable, like a Transformer. The prototypical SSM architecture is Mamba \citep{gu2023mamba}, though there are many others.

Unfortunately, it has been proven that there is not yet a ``free serial-compute lunch,'' in the sense that the main families of SSMs proposed so far still cannot solve the word problem of $S_5$ under the constraints of $\mathcal{O}(1)$ layers, $\poly(n)$ dimensions per layer, and one forward pass. Despite its apparent recurrence, the hidden state offered by an SSM is weaker in a computational sense than that offered by an RNN~\citep{strobl2023transformers, merrill2023expresssive, merrill2024illusion}. This theoretical fact rhymes with the empirical fact that in practice, MLPs, Transformers, and SSMs are all more parallelizable than RNNs.

Only a genuinely serial method has been shown to go beyond the $\mathsf{TC}^0$ class. In addition to RNNs, this includes repeating layers and CoT. Repeating layers for $\mathcal{O}(\log n)$ times enables a standard Transformer to solve tasks in the $\mathsf{TC}^1$ class \citep{merrill2025little}. With $\mathsf{poly}(n)$ CoT, requiring multiple forward passes before producing a final answer, a Transformer can be lifted from $\mathsf{TC}^0$ to $\mathsf{P}$ \citep{feng2023towards,li2024chain}. As discussed in \cref{sec:gpqa}, the power of CoT has been well-attested in practice by the improved performance of reasoning models in complex math and science tasks.

Such a uniformity restriction is necessary for technical reasons. Specifically, it is necessary because one may hide a large amount of computation into a small circuit that requires a long time to find. The final circuit produced might run in time $\mathcal{O}(\log n)$, but if the time required to find such a circuit requires time $2^{\mathcal{O}(n)}$, then this would not be parallel---in the sense used throughout this paper.

\section{Diffusion is in \texorpdfstring{$\mathsf{TC}^0$}{TC⁰}} \label{sec:diffusion_appendix}

In this section, we consider the non-uniform $\mathsf{TC}^0$ class, in contrast to the usual uniform classes. Note that $\text{uniform } \mathsf{TC}^i \subseteq \text{non-uniform } \mathsf{TC}^i$. We prove that many diffusion models are restricted within that class. We need to assume non-uniformity because, at a certain point in the proof of the main theorem, we merely prove that something exists, without showing that it is also efficiently computable. We will highlight this in the proof.

% NOTE: the non-uniformity should be elaborated to give a sense why should we need it.

\subsection{Problem settings}
% \subsection{Sequence modeling}

An abstract language is simply a set of sentences made of letters. Formally:

\begin{itemize}
    \item An \textbf{alphabet} $\Sigma$ is a finite nonempty set. Each element in the alphabet may be called a \textbf{letter} or a \textbf{token}.
    \item A \textbf{sentence} in an alphabet $\Sigma$ is a finite sequence of elements of $\Sigma$.
    \item A \textbf{language} in an alphabet $\Sigma$ is a set of sentences in the alphabet $\Sigma$.
\end{itemize}

An abstract language is more than a natural language. A sentence in a natural language is a sequence of tokens. An image, divided into patches and tokenized, becomes a sentence in an abstract language. A video, divided into frames and patches and tokenized, becomes a sentence in an abstract language.

We define the \textbf{deterministic prefix language modeling problem}: given a sequence of tokens $x_1, \dots, x_n$, compute the next token $x_{n+1}$. This is a deterministic formalization of next-token prediction, the dominant paradigm in language modeling since the GPT-2 of 2019. This can also be cast into a decision problem: given input of size $n$, $x_1, \dots, x_n$, decide whether $x_{n+1}$ is the correct continuation.

Most language model benchmarks can be cast into this form, at least when without chain of thought. For example, consider a problem from the GSM8K benchmark ``Tina buys three 12-packs of soda for a party ... How many sodas are left over when the party is over?'' has only one correct continuation, ``11''.

The definition has some issues. Some problems may have more than one correct answer. Some answers may occupy more than one token. We may allow chain of thought. Language modeling is not restricted to prefix modeling. A touted benefit of diffusion language modeling is that, unlike GPT models, it can generate any number of tokens anywhere in a sequence conditional on other parts of the sequence.

Define \textbf{nondeterministic masked sequence modeling problem} as follows. Consider a sequence of tokens $x_1, \dots, x_n$, some of which are masked. The task is to compute a sequence of tokens that can acceptably fill in the masks.

For example, a game of 24 may be cast into a nondeterministic masked modeling problem\footnote{Brackets would create different lengths for the masked sequence. We avoid them by reverse Polish notation.} as follows: ``1, 3, 9, 9, [M][M][M][M][M][M][M] = 24''. This has more than one acceptable answer, such as ``9 3 ÷ 11 × 9 -''.

As another example, solving a problem with chain of thought is unmasking an entire block of masked tokens, where any unmasking is acceptable as long as the tokens between ``Final answer: ... [END]'' are correct. 

% One way to view nondetetrministic masked sequence modeling is as a form of data augmentation. 
% Given a problem instance, there may be only one correct answer, but this single correct answer might be augmented to an entire manifold in a high-dimensional space. Then we can train a model such that, conditional on the problem instance, it would sample a particular point in the solution manifold, from which the original correct answer(s) can be recovered by discretizing.

Given a problem instance, we can train a model such that, conditional on the problem instance, it would sample one point from possibly many points that form a solution manifold, from which the answer can be recovered by discretizing.

We need to make the discretization step as simple as possible, to avoid secretly performing part of the problem-solving in the discretization step. Consequently, we will only consider cases where the masking pattern is fixed, and the discretization is performed using a fixed amount of compute.

As a concrete illustration, suppose that we use a diffusion video model to evaluate an arithmetic expression. The input is a sequence of frames presenting the arithmetic expression to be evaluated, and the masked tokens are the frames illustrating an animated process that draws out the answer. The solution might be written in many variable ways, creating the solution manifold. The discretization step can be done by taking the pixel-wise nearest neighbor of the last frame's digits.

Our overall framework for sequence modeling is:
\[\begin{tikzcd}
	{\text{true answer}} && {\text{solution manifold}} && 
	\arrow["{\text{discretization}}"', from=1-3, to=1-1]
	\arrow["{\text{stochastic sampling}}"', from=1-5, to=1-3]
\end{tikzcd}\]

% \[\begin{tikzcd}
% 	{\text{true solutions}} && {\text{solution manifold}} && 
% 	\arrow["{\text{augmentation}}"', shift right=2, from=1-1, to=1-3]
% 	\arrow["{\text{discretization}}"', shift right, from=1-3, to=1-1]
% 	\arrow["{\text{stochastic sampling}}"', from=1-5, to=1-3]
% \end{tikzcd}\]

We will show that if the dimension of the solution manifold does not grow quickly, and the stochastic sampling is sufficiently parallelizable and accurate, then we can derandomize it to obtain a parallelizable deterministic algorithm for obtaining the true answer, thus showing that the original problem is parallelizable, not serial.

\subsection{Diffusion modeling}

There are several equivalent formulations of diffusion modeling. We use the score-matching formulation. In this formulation, given a probability density $\rho_{0}$ to be modeled, and a \textbf{noise schedule} $\sigma_1, \sigma_2, \dots, \sigma_T$, we define the forward diffusion process by adding an increasing amount of Gaussian noise to $\rho_0$:

$$x_0 \sim \rho_0, \quad x_t | x_0 \sim \mathcal N\left(\sqrt{1-\sigma_t^2} x_0, \sigma_t^2 I\right)$$

Let $\rho_t$ be the probability density of $x_t$.  Any (true) \textbf{score function} at time $t$ is $f^*(t, x) := \nabla \log \rho_t(x)$. To score-match is to find an (approximate) score function $f_{\theta}$, such that $f_{\theta}(t, x) \approx f^*(t, x)$. In the sense that the \textbf{average score modeling error} is low:
$$
\epsilon_{\text{score}} := \sqrt{\frac{1}{T} \sum_{t=1}^T \mathbb{E}_{x_t \sim \rho_t}\left[\left\|f_\theta(t, x_t)-f^{\star}(t, x_t)\right\|_2^2\right]}
$$

Given a score approximator $f_\theta$ and a noise schedule $\sigma_t$, the \textbf{Score-Matching with Langevin Dynamics (SMLD)} \citep{song2019generative} sampler performs a backward diffusion accordingly, sampling a sequence of $\hat x_{T}, \hat x_{T-1}, \dots, \hat x_{0}$. Let $\rho_{SMLD, t}$ be the probability density of $\hat x_t$. The theoretical basis for diffusion modeling is that, at the limit of continuous noise schedule -- infinitely many steps, each adding an infinitesimal amount of noise -- and the limit of perfect score function, $\rho_{SMLD, 0}$ would converge to be exactly equal to $\rho_0$.

Generally, the difference between $\rho_{SMLD, 0}$ and $\rho_{0}$ can be understood as consisting of two parts: a discretization error, caused by taking finitely many small steps, instead of infinitely many infinitesimal steps; a score-matching error part, caused by $f_\theta \neq f^*$.

Intuitively, the discretization error converges to zero as $T \to \infty$, while the score-matching error grows with $\epsilon_{\text{score}}$ and $T$. In the next section, we quote a theorem from the literature that makes this intuition precise.

\subsection[The fundamental theorem (Li \& Yan, 2024, Thm. 2)]{The fundamental theorem \citep[Thm.~2]{li2024d}} 

Intuitively, a point is easier to model than a line, which is easier than a plane, and so on. This can be made precise by the intrinsic dimension of the support of a probability distribution to be modeled. 

\textbf{Definition.} The \textbf{support} of a probability distribution $\rho_0$ is $\operatorname{supp}(\rho_0)$. It is the smallest set that satisfies $\Pr_{x \sim \rho_0}(x \in \operatorname{supp}(\rho_0)) = 1$.

Let $T$ denote the number of SMLD denoising steps. It is a positive integer.

\textbf{Definition.} Given a base space $\Omega$, and two probability densities $p, q$ over it, their \textbf{total variation} is 
$$TV(p, q) := \int_{\Omega} |p(x) - q(x)| dx$$

\textbf{Definition.} Let $X$ be an arbitrary subset of $\R^n$. Let $\epsilon > 0$. The \textbf{discrete intrinsic dimension} is
$$
d_\epsilon (X) := \frac{\log N_\epsilon (X)}{\log \frac{1}{\epsilon}},
$$
where $N_\epsilon(X)$ is the minimal number of radius-$\epsilon$ balls necessary to entirely cover $X$. As $\epsilon\to 0$, if the discrete intrinsic dimension converges, then what it converges to is the \textbf{intrinsic dimension}.

What does it mean?

Consider a 2-dimensional square $X$ in $\R^{n}$. As we repeatedly halve the value of $\epsilon$, each halving would require 4 times as many radius-$\epsilon$ balls to cover the same square. Thus, $d(X) = \frac{\log 4}{\log \frac{1}{1/2}} = 2$. This is true no matter if $n = 3$ or $n = 300$, and no matter what side length the square has. Thus, the intrinsic dimension recaptures our intuition. However, it is well-defined for more than smooth manifolds. It is well-defined for more general sets, such as fractals. The expression for intrinsic dimension is deeply related to metric entropy, and has applications in statistics and information theory. \citep[Chap.~5]{wainwright2019high}

However, the intrinsic dimension is not accessible in practice.
For example, consider a line segment $X$ in $\R^{100}$. It has $d_\epsilon(X) \to 1$ as $\epsilon$ decreases. However, when $\epsilon$ gets small enough, suddenly $d_\epsilon(X)$ starts converging to 2 instead. What happened? It turns out the line segment is not really a line segment. At a high enough zooming level, it is actually a cylindrical tube. However, when $\epsilon$ got even smaller, suddenly $d_\epsilon(X)$ starts converging to 1 again. It turns out the cylindrical tube is actually a tightly-wound helix curve.

What is accessible in practice is the discrete intrinsic dimension $d_\epsilon(X)$. Intuitively, it is intrinsic dimension $X$ if we are not allowed to look more closely than $\epsilon$. Another way to intuit it is by taking $X$ and constructing a low-dimensional $\epsilon$-skeleton $X_\epsilon$ of it. Any point in $X$ is within distance $\epsilon$ of $X_\epsilon$. The intrinsic dimension of the ``dehydrated skeleton'' of $X$ is approximately $d_\epsilon(X)$.

\begin{theorem}\label{thm:intrinsic-dimension-convergence}
There exists constants $c_M, c_\epsilon, c > 0$, such that the following is true. Fix any positive integer $T > 0$. Fix $\epsilon = T^{-c_{\epsilon}}$. Let $\rho_0$ be the target distribution. If the target distribution has first order moment that is bounded by a polynomial in $T$:
$$
\mathbb{E}_{x_0 \sim \rho_0}\left[\left\|x_0\right\|_2\right] \leq T^{c_M}.
$$
and we have a score network $f_\theta(x, t)$ such that it achieves average score modeling error $\epsilon_{\text{score}}^2$, then there exists a SMLD sampler schedule that takes $T$ steps, such that the result of SMLD sampling has distribution $\rho_{SMLD, 0}$, and 
$$
\mathrm{TV}\left(\rho_0, \rho_{SMLD, 0}\right) \leq c \left( \frac{d_\epsilon(\operatorname{supp}(\rho_0))}{T}(\log T)^3 + \epsilon_{\text{score}} \sqrt{\log T} \right)
$$
\end{theorem}

The two terms correspond to error caused by taking discrete steps, and error caused by drifting away from score matching error.

\subsection{Main theorem}

We formalize a model of non-uniform $\mathsf{TC}^0$ diffusion modeling.

Given a task, we define what it means to solve it by nondeterministic masked sequence modeling. Each task instance is specified by a sequence of $n$ tokens, written as $x_1, \dots, x_n$. Of these, $k_n$ tokens are fixed inputs to the model to specify the problem, and $n - k_n$ tokens are masked, which the model would denoise. The denoised tokens are discretized to a final answer.

To avoid surreptitiously hinting the diffusion model the answer, the masked tokens are always in the same place, which, WLOG, we assume comes at the end. Similarly, we require that for each $n$, $k_n$ is fixed. Finally, we require the discretization step to require constant compute, to avoid secretly offloading problem-solving computation from the diffusion model to the discretization algorithm.

Given a task, a $\mathsf{TC}^0$ family of score-networks $f_{\theta,n}$ for such a task is a sequence of networks $f_{\theta, 1}, f_{\theta, 2}, \dots$, such that:
\begin{itemize}
\item Each $f_{\theta, n}$ takes as input $n+1$ elements $x_1, \dots, x_n, t$, and produces an output $f_{\theta, n}(x_{k_n+1}, x_{k_n+2}, \dots, x_n | t, x_1, \dots, x_{k_n})$.
\item The family $f_{\theta, n}$ has $\mathcal{O}(1)$ depth, $\mathsf{poly}(n)$ width, and $\mathcal{O}(1)$ precision.
\end{itemize}

\textbf{Comment.} The holds for any family of score-networks for which a single forward pass is in $\mathsf{TC}^0$. This includes, for example, Transformers and state-space models \citep{merrill2023parallelism, merrill2024illusion}.

For notational convenience, we will thenceforth assume that in our task, there is one true solution per problem. If there are multiple true solutions per problem, then our proof shows that the task of ``finding at least one true solution for the original task'' is $\mathsf{TC}^0$.

Since a diffusion model may solve a problem only with high probability, instead of solving it deterministically, we make the following definition: 

A task is \textbf{solved with constant probability bound} if there exists some $\delta > 0$, such that for each input token sequence $x_1, \dots, x_n$, let $x_{\text{correct}}$ be the correct discrete solution, then
\begin{equation}
p(x_{\text{correct}}|x_1, \dots, x_n) > p(x'|x_1, \dots, x_n) + \delta, \quad \forall x' \neq x_{\text{correct}}.
\end{equation}

As noted before, there are 2 places where a diffusion model can incur error in its solution. The first place is due to using discrete time-steps during the backward diffusion process. The second place is due to score-matching error compared to the true forward diffusion process from the solution manifold. 

We are ready to state the theorem. We apologize for the amount of complicated epsilon-delta formality in the statement, but it is what it takes to make it rigorous.

\begin{theorem}\label{thm:diffusion-tc0}
Given a task, and a $\mathsf{TC}^0$ family of score-networks $f_{\theta, 0}, f_{\theta, 1}, \dots$, for solving the task, by SMLD with $f_{\theta, n}(x_{k_n + 1}, x_{k_n + 2}, \dots, x_{n} | t, x_1, \dots, x_{k_n})$.

Let the $n$-th solution manifold of the task be $X_n$. Assume:

\begin{enumerate}
    \item There exists a constant integer $T > 0$, a constant real number $\epsilon_{\text{score}} > 0$, a small constant real number $\delta > 0$, and a sequence of probability distributions $\rho_{0, n}$ on $X_n$, such that
    \item Letting $c, c_\epsilon, c_M$ be the constants used in the statement of Theorem \ref{thm:intrinsic-dimension-convergence}, $\epsilon = T^{-c_\epsilon}$, and $\epsilon_{\text{score}, n}$ be the average score modeling error of using $f_{\theta, n}$ to score-match the forward diffusion process on $\rho_{0, n}$,
    \item For all $n = 1, 2, 3, \dots$, we have
    \begin{align*}
        \mathbb E_{x_0 \sim \rho_{0, n}}[\|x_0 \|_2] &\leq T^{c_M},\\
        \epsilon_{\text{score}, n} &\leq \epsilon_{\text{score}}, \\
        c \left(\frac{d_\epsilon(X_n)}{T}(\log T)^3 + \epsilon_{\text{score}} \sqrt{\log T}\right) &\leq \frac{1}{2} - \delta. \\
    \end{align*}
\end{enumerate}

Then the task is in the non-uniform $\mathsf{TC}^0$ class.

More generally, if the family of score-networks is a $\mathsf{TC}^k$ family, and assume:

\begin{enumerate}
    \item There exists a non-negative integer $k$, a sequence $T_1, T_2, \dots$ such that $T_n = \mathcal{O}((\log n)^k)$, a sequence $\delta_n$ such that $1/\delta_n = \mathsf{poly}(n)$, such that
    \item Letting $c, c_{\epsilon}, c_M$ be the constants used in the statement of Theorem \ref{thm:intrinsic-dimension-convergence}, $\epsilon_n = T_n^{-c_\epsilon}$, and $\epsilon_{\text{score}, n}$ be the average score modeling error of using $f_{\theta, n}$ to score-match the forward diffusion process on $\rho_{0, n}$,
    \item For all $n = 1, 2, 3, \dots$, we have
    \begin{align*}
        \mathbb E_{x_0 \sim \rho_{0, n}}[\|x_0 \|_2] &\leq T_n^{c_M},\\
        c \left(\frac{d_{\epsilon_n}(X_n)}{T_n}(\log T_n)^3 + \epsilon_{\text{score}, n} \sqrt{\log T_n}\right) &\leq \frac{1}{2} - \delta_n. \\
    \end{align*}
\end{enumerate}

then the task is in the non-uniform $\mathsf{TC}^k$ class.

\end{theorem}

\begin{proof}
The proof of the case for $\mathsf{TC}^k$ is essentially the same as the case for $\mathsf{TC}^0$, with more cumbersome notations. Thus, we only prove the case for $\mathsf{TC}^0$ explicitly.

Using the big list of assumptions, we can apply Theorem \ref{thm:intrinsic-dimension-convergence} directly, and conclude that, if we use SMLD with $f_{\theta, n}$ as the score network, for $T$ steps, we would sample from a distribution $\rho_{SMLD, 0, n}$ that satisfies

$$
TV(\rho_{0, n}, \rho_{SMLD, 0, n}) \leq 1/2 - \delta
$$

In particular, this means that after discretization, we obtain the correct solution with probability at least $1/2 + \delta$. This then implies that the task is solved with constant probability bound. Now we can derandomize this family, obtaining a $\mathsf{TC}^0$ family of Boolean circuits that solves the problem deterministically. The details of the derandomization method appear in \cite[Prp. 4.2]{hajnal1993threshold}. It goes as follows:

 for each length $n$, we replicate the network $k(n)$ times. Each network must take one seed. For each choice of $k(n)$ seeds $s_1, \dots, s_{k(n)}$, we have a particular deterministic model:

\begin{equation*}
    x \mapsto \text{majority}(f(x, s_1), f(x, s_2), \dots, f(x, s_{k(n)})).
\end{equation*}
If $s$ is randomly sampled, then let the probability that $f(x, s)$ is wrong be upper-bounded by a constant $p < 1/2$. By Hoeffding's inequality \citep{hoeffding1963probability}, the probability that the majority vote is correct is $\geq 1-e^{-2k(n)(p-1/2)^2}$. 

Sample $k(n)$ random seeds, and fix them. This provides a deterministic model. Now, we try this deterministic model on every single possible input of length $n$. There are only $\#\text{vocab}^n$ of them. If we set $k(n) \geq \frac{\log (\#\text{vocab})}{2(p-1/2)^2}$, then by the union bound, the probability that the majority vote is correct on \textit{all} inputs of length $n$ is nonzero. Thus, there exists a specific choice of random seeds $(s_1, \dots, s_{k(n)})$ that makes the compound model correct on all inputs of length $n$.

This construction is nonuniform precisely in the part where we have only shown the choice of seeds\footnote{In complexity theory, this choice of seeds is an ``advice string''.} exists. To actually find these seeds may take exponential time.
\end{proof}

\subsection{Interpretation}

At the high level, to satisfy the assumptions, one needs to have a good enough approximate score with an error $\epsilon_\text{score}$. Then, choose $T$ (or $T_n$) such that 

$$
c \left(\frac{d_\epsilon(X_n)}{T}(\log T)^3 + \epsilon_{\text{score}, n} \sqrt{\log T}\right) \leq 1/2 -\delta
$$

For constant $d_\epsilon(X_n)$, we have the following interpretations:

\textbf{Perfect diffusion model is non-uniform $\mathsf{TC}^0$.} Consider the case if we have a perfect score function $\epsilon_\text{score} = 0$, we have $c \left(\frac{d_\epsilon(X_n)}{T}(\log T)^3 \right) \leq 1/2 -\delta$.
We can select $T = O(d_\epsilon(X_n)) = O(1)$, hence, the task belongs to the non-uniform $\mathsf{TC}^0$ class. This is the same finding as \citet{liu2025perfect}.

\textbf{Good diffusion model is still non-uniform $\mathsf{TC}^0$.} Consider if the score function is not perfect. Here, the the accumulated error grows slowly with $T$, i.e., $\epsilon_\text{score} \sqrt{\log T}$. 
% The term $\frac{d_\epsilon(X_n)}{T}(\log T)^3$ comes down quickly with increasing $T$ but floors by $\epsilon_\text{score} \sqrt{\log T}$, which only grows slowly but does put up the minimum value. 
As $T$ increases, the term $\frac{d_\epsilon(X_n)}{T}(\log T)^3$ decreases quickly, while the term $\epsilon_{\text{score}} \sqrt{\log T}$ increases only slowly and eventually dominates, setting a floor on the total error.
If the error is not too large, we can still find a constant $T$ to obtain a constant error bound. 

Similar interpretations can be obtained for non-uniform $\mathsf{TC}^k$ if $d_{\epsilon_n}(X_n) = O((\log n)^k)$.

There are two scenarios that the theorem's assumptions are violated and the conclusion of the theorem does not apply.

\textbf{$\epsilon_{\text{score}}$ is too large.}
In this case the inequality fails and our theorem no longer applies.
This corresponds to a diffusion model that is not doing a good job at its intended task of score matching.
We expect this to occur when the diffusion model is poorly trained or under-parameterized, or when the network is not being used as a score-matching diffusion model at all.
If it is not used for score matching, SMLD can in principle implement arbitrary Turing computations via a highly non–score-matching network.
% Indeed, there is essentially no constraint on the computational power of SMLD, since any Turing machine can be constructed in SMLD with some (extremely non-score-matching) network.

\textbf{The intrinsic dimension of the solution space $d_\epsilon(X_n)$ grows faster than $\mathsf{polylog}(n)$.} In this case, the convergence rate of the diffusion model will be slow enough that the task \textit{may} go beyond non-uniform $\mathsf{TC}^k$ if $d_{\epsilon_n}(X_n) = O((\log n)^k)$. However, it doesn't immediately imply that the task is in $\mathsf{P} \setminus \mathsf{TC}$, nor the diffusion model can solve an inherently serial problem. It only means that diffusion models solve a problem with polynomial number of intrinsic dimensions with polynomial steps.

\section{Inherently Serial Problems in RL}\label{sec:rl_seriality}

Throughout this section, by a parallel algorithm, we mean specifically an $\mathsf{L}$-uniform $\mathsf{TC}$ Boolean circuit family---as usual throughout this paper.

In this section, we begin with a problem from computational complexity theory that is proven to be impossible to parallelize (assuming, as always, that $\mathsf{TC} \neq \mathsf{P}$), then convert it into a deterministic decision problem. We then prove a theorem, showing that any parallel decision rule for this problem has arbitrarily bad worst-case performance. As special cases, this includes maximizing parallel value functions, maximizing parallel Q-functions, parallel policies, and parallel learning rules that produce parallel policies.

\subsection{Definitions}
A Boolean circuit $C$ is \textbf{alternating} when it consists solely of $\operatorname{AND}$ and $\operatorname{OR}$ gates such that every $\operatorname{AND}$ gate connects to only $\operatorname{OR}$ gates and every $\operatorname{OR}$ gate connects to only $\operatorname{AND}$ gates.

Alternating circuits are \textbf{monotonic} in the following sense. Consider an alternating circuit that takes $n$ inputs, and let $x,x'\in\{0,1\}^n$ be two possible inputs to it. If $x\le x'$ (i.e., $x'$ is obtained by flipping some zeros of $x$ to ones), then $C(x)\le C(x')$. More generally, for any gate output $C_i$ of $C$, we have $C_i(x)\le C_i(x')$; intuitively, this can be visualized as ``hot'' wires carrying \texttt{True}-signals monotonically ``upwards'' through the circuit.

The \textbf{depth} of a gate is the length of the longest directed path from any circuit input to that gate. A chain of the form $\operatorname{OR}\!\rightarrow\!\operatorname{AND}\!\rightarrow\!\operatorname{OR}\!\rightarrow\!\cdots$ therefore has gate depths $1,2,3,\dots$ in order.

For an alternating circuit $C$ on input $x$, the \textbf{depth-of-1} (DO1) of this circuit configuration is

$$
d_1(C,x)\;:=\;\max\bigl\{\text{depth}(g)\;:\;g\text{ outputs }1\text{ on }x\bigr\}.
$$

The problem of computing the DO1 of any circuit configuration is the \textbf{DO1 problem}. Assuming that $\mathsf{TC} \neq \mathsf{P}$, as usual in the paper, then the DO1 problem cannot be solved by a parallel algorithm. In fact, much more can be said. Not only would it be non-parallelizable, any approximation of it is also non-parallelizable.

Fix constants $0<\epsilon<b$. Given $(C,x)$, the problem of computing a number $d(C,x)$ satisfying

$$
d(C,x)\;\in\;[\epsilon\,d_1(C,x),\;b\,d_1(C,x)].
$$

Because a solution for $[\epsilon,b]$ yields one for $[\epsilon/b,1]$, it suffices to consider the special case $b=1$, which we call the \textbf{$\epsilon$-approximate DO1} problem.

\subsection{Non-parallelizability Results}

We quote the following result from Kirousis and Spirakis~\citep{kirousis1988probabilistic}.

\begin{theorem}\label{thm:ks}
For every $\epsilon\in(0,1)$, the $\epsilon$-approximate DO1 problem is $\mathsf{P}$-complete. Consequently, if $\mathsf{TC}\ne\mathsf{P}$, no parallel algorithm can solve the $\epsilon$-approximate DO1.
\end{theorem}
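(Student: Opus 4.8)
The plan is to establish the two halves of $\mathsf{P}$-completeness separately and then read off the parallel lower bound from \cref{as:tc_p}. Membership in $\mathsf{P}$ is immediate: given $(C,x)$, one evaluates the circuit (an instance of the Circuit Value Problem, which is in $\mathsf{P}$), records which gates output $1$, and scans for the largest depth among them; this computes $d_1(C,x)$ exactly in polynomial time, and the exact value is always a legal answer to the $\epsilon$-approximate problem since $d_1(C,x)\in[\epsilon\,d_1(C,x),\,d_1(C,x)]$. Hence it suffices to prove $\mathsf{P}$-hardness, which is where the real work lies.

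For hardness I would reduce from the monotone alternating Circuit Value Problem, a standard $\mathsf{P}$-complete variant of CVP under logspace reductions. The core idea is \emph{gap amplification}: I want the answer bit of the circuit to be encoded in whether $d_1$ is ``large'' or ``small,'' with a multiplicative gap wide enough that even an $\epsilon$-approximation cannot blur it. First I would prune the given circuit to the input cone of its output gate $o$, so that $o$ becomes the unique gate of maximum depth $D_0$ and every surviving gate has depth at most $D_0-1$ (pruning preserves both the value at $o$ and the depths of the remaining gates). Then I would append above $o$ a \emph{propagation tower}: a chain of alternating $\operatorname{OR}$ and $\operatorname{AND}$ gates of height $H$, each gate simply copying its predecessor's value upward—for instance by feeding each $\operatorname{OR}$ a hardwired $0$ and each $\operatorname{AND}$ a hardwired $1$ as its second input. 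By monotonicity every tower gate then outputs exactly the value of $o$.

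The effect is a clean dichotomy on the new circuit $C'$. If $o$ outputs $1$ the whole tower is hot and $d_1(C',x)=D_0+H$; if $o$ outputs $0$ the tower is dead and $d_1(C',x)\le D_0-1$. Choosing $H=\lceil D_0(1/\epsilon-1)\rceil+1$—polynomial in the instance size, since $\epsilon$ is a fixed constant and $D_0$ is at most the circuit size—forces $\epsilon(D_0+H)\ge D_0$. Consequently any solver for $\epsilon$-approximate DO1 returns a value that is at least $D_0$ in the accepting case and strictly below $D_0$ in the rejecting case, so thresholding its output at $D_0$ recovers the original output bit. Because the entire transformation (prune, append the chain, emit the description) is logspace-computable, this is a valid $\mathsf{P}$-hardness reduction for every fixed $\epsilon\in(0,1)$, and with membership it yields $\mathsf{P}$-completeness. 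The final sentence is then automatic: a parallel—i.e.\ $\mathsf{TC}$—solver would place a $\mathsf{P}$-complete problem inside $\mathsf{TC}$, giving $\mathsf{P}\subseteq\mathsf{TC}$ and contradicting \cref{as:tc_p}.

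The step I expect to be most delicate is securing the \emph{lower} half of the gap: guaranteeing that when $o$ outputs $0$ no deep gate is accidentally hot and keeps $d_1$ large. This is exactly what pruning to the cone of $o$ buys, since it makes $o$ the unique deepest gate and rules out stray gates at or below its depth that could survive on rejecting inputs. The remaining care—threading the copy-chain through the alternating $\operatorname{AND}/\operatorname{OR}$ discipline so that each gate faithfully reproduces $o$—is routine once the hardwired identity inputs are in place.
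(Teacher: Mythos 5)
The paper never proves this theorem: it is quoted verbatim from Kirousis and Spirakis \cite{kirousis1988probabilistic}, so your proposal cannot be matched against an in-paper argument and must be judged on its own. On its own terms, your overall strategy is the right one and is the standard route for such approximation-hardness results: membership in $\mathsf{P}$ by exact evaluation, and $\mathsf{P}$-hardness by gap amplification, appending an alternating identity chain on top of the output gate $o$ so that the output bit becomes a multiplicative gap in $d_1$ too wide for an $\epsilon$-approximator to blur. The dichotomy analysis, the choice $H=\lceil D_0(1/\epsilon-1)\rceil+1$, and the closure argument in the final sentence (an $\mathsf{L}$-reduction composed with a $\mathsf{TC}$ family stays in $\mathsf{TC}$) are all correct.

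The genuine gap is your closing claim that the transformation is ``logspace-computable.'' Two of its ingredients are not known to be in $\mathsf{L}$. Pruning $C$ to the input cone of $o$ is directed reachability in a DAG, which is $\mathsf{NL}$-complete; and your post-processing threshold $D_0=\operatorname{depth}(o)$ is a longest-path computation in a DAG, likewise not known to be logspace-computable. Since the paper's notion of $\mathsf{P}$-completeness (\cref{sec:tc}) is defined via $\mathsf{L}$-uniform many-one reductions, and hardness of an approximation problem is precisely a statement about reduction strength, this is a real hole in the hardness half as written. Fortunately your own construction repairs it: drop the pruning and out-scale the stray gates instead. Take tower height $H=\lceil(|C|+1)/\epsilon\rceil$ and threshold $|C|$: if $o$ outputs $1$, every legal $\epsilon$-approximate answer is at least $\epsilon\bigl(\operatorname{depth}(o)+H\bigr)>|C|$; if $o$ outputs $0$, all tower gates copy the value $0$ and every hot gate—inside or outside the cone of $o$—has depth at most $|C|$, so any legal answer is at most $|C|$. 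Counting gates, emitting a chain of prescribed length, and comparing the returned number against $|C|$ are all trivially logspace, so this variant is a genuine $\mathsf{L}$-reduction and the theorem follows. (Alternatively, you could keep the pruning and settle for $\mathsf{P}$-hardness under $\mathsf{NC}$ or $\mathsf{NL}$ reductions—the ``consequently'' clause survives, since such reductions still compose inside $\mathsf{TC}$—but that establishes a weaker statement than $\mathsf{P}$-completeness as this paper defines it.)
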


To import the theorem from computational complexity theory into RL theory, we need to construct a specific decision environment in which an agent must perform actions to maximize rewards.

The idea of the following construction is that an approximately optimal agent can be exploited as a problem-solving resource for other ends. Specifically, we will construct some environments in which the first action of the agent is a forced choice between two circuits. In the language of psychologists, we construct two-alternative forced choice experiments. The agent's choice can then be interpreted as an agent's judgment as to which circuit has a deeper depth-of-1.

Given a circuit configuration $(C, x)$, we construct, in parallel, many other circuit configurations $(C'_1, 1), (C'_2, 1), (C_3', 1), \dots, (C_{|C|}', 1)$, such that $d_1(C'_1, 1) = 1, d_1(C'_2, 1) = 2, d_1(C_3', 1) = 3, \dots, d_1(C_{|C|}', 1) = |C|$. Then, we perform in parallel all forced choices for these pairs: $(C, x)$ vs $(C'_1, 1)$, $(C, x)$ vs $(C'_2, 1)$, ..., $(C, x)$ vs $(C'_{|C|}, 1)$. At some point, the agent's forced binary choice should switch from preferring $(C'_{k}, 1)$ to preferring $(C, x)$. This can be taken as a judgment that $d_1(C, x) \approx k$. If the agent is approximately optimal, then $d_1(C, x) \approx k$ is correct, in the sense that we can guarantee $k \in [\epsilon d_1(C, x), b d_1(C, x)]$ for some constants $0 < \epsilon < b$ that do not depend on either $C$ or $x$.

This is the essential idea of the construction and the subsequent proof. The rest are tedious details designed to close loopholes.

We define the \textbf{DO1 environment} as follows. Given a circuit configuration $(C,x)$, define a corresponding deterministic decision problem as follows.

\begin{enumerate}
  \item At $t=0$, the agent observes two circuit configurations: the original $(C,x)$ and a length-$k$ alternating chain of the form $\operatorname{OR}\!\rightarrow\!\operatorname{AND}\!\rightarrow\!\operatorname{OR}\!\rightarrow\!\cdots$, whose input is \texttt{1}.
  \item The agent selects one gate from one of the circuits; the unchosen one is then removed. This is the two-alternative forced choice.
  \item Subsequently, the agent may select at most one additional gate per time-step, for $H:=\max\{k,|C|\}$ steps, designed so that the agent has enough time to choose every desired gate. 
  \item The episode ends at $t=H$. If every chosen gate outputs 1, then the reward at this step is $r_H = $ the maximal depth among all the chosen gates. Otherwise, $r_H = 0$. The reward at all other steps is zero.
\end{enumerate}

This means that each DO1 environment can have the following states:
\begin{itemize}
    \item No gates are selected.
    \item Some gates of $C$ are selected.
    \item Some gates of the alternating chain are selected.
\end{itemize}

\begin{theorem}[optimal decision in the DO1 environment is inherently serial]\label{thm:serial-rl}
Assume $\mathsf{TC}\ne\mathsf{P}$.
\begin{enumerate}[label=(\alph*)]\setlength\itemsep{0.4em}
  \item No parallel algorithm can compute an approximate optimal value function $V$ such that $\exists\,0<\epsilon<b$ with
        $V(s)\in[\epsilon V^*(s),\,bV^*(s)]$ for every state $s$ in every DO1 environment.
  \item For any parallel algorithm producing a value function $V$ and any $\epsilon\in(0,1)$, there exists a DO1 environment where the greedy policy $\pi_V(s):=\arg\max_a V(s')$ achieves terminal reward $r_H<\epsilon r^*$, where $r^*$ is optimal. Here, $s'$ denotes the next state if, at state $s$, the agent performs action $a$.
  \item Statement~(b) extends to any parallel algorithm that, given a state in a DO1 environment, outputs an action.
\end{enumerate}
\end{theorem}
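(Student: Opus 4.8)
The plan is to reduce the $\epsilon$-approximate DO1 problem to each of the three tasks, so that a parallel solver for any of them would parallelize DO1 and contradict Theorem~\ref{thm:ks} under $\mathsf{TC}\ne\mathsf{P}$. The workhorse is the value of the initial state $s_0$ of the DO1 environment built from $(C,x)$ together with an alternating chain of depth $k$: committing to $C$ yields at most $d_1(C,x)$ (all chosen gates must output $1$, and the deepest such gate has depth $d_1(C,x)$, which is achievable by selecting only that gate), while committing to the chain yields exactly $k$ (every gate of the all-$1$ chain outputs $1$, so selecting its deepest gate scores $k$). Hence the optimal terminal reward is $r^{*}=V^{*}(s_0)=\max\{d_1(C,x),k\}$. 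For part~(a) I would take $k=1$, so that $V^{*}(s_0)=d_1(C,x)$ on instances with $d_1\ge 1$ (which we may assume WLOG, padding in a guaranteed true depth-$1$ gate). An approximately optimal $V$ then satisfies $V(s_0)\in[\epsilon\,d_1(C,x),\,b\,d_1(C,x)]$, so the number $V(s_0)$ is itself an $[\epsilon,b]$-approximation of DO1; since $(C,x)\mapsto s_0$ is a logspace reduction and $V$ is an $\mathsf{L}$-uniform $\mathsf{TC}$ family by hypothesis, the composite is a parallel $[\epsilon,b]$-approximate (equivalently $(\epsilon/b)$-approximate) DO1 algorithm, contradicting Theorem~\ref{thm:ks}.

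For part~(b) I would argue by contradiction: suppose the greedy policy $\pi_V$ attains $r_H\ge\epsilon r^{*}$ on every DO1 environment. The first action is the forced choice, so $\pi_V$ commits either to $C$ or to the chain, and whichever it does bounds $r_H$ from above by $d_1(C,x)$ or by $k$ respectively, independent of later play. Combining each upper bound with the assumed lower bound $r_H\ge\epsilon\max\{d_1(C,x),k\}$ gives, for every depth $k\in\{1,\dots,|C|\}$, the clean one-sided facts: if $\pi_V$ commits to the chain then $d_1(C,x)\le k/\epsilon$, and if it commits to $C$ then $d_1(C,x)\ge\epsilon k$. (If $\pi_V$ ever commits to $C$ through a gate outputting $0$, then $r_H=0<\epsilon r^{*}$ already and we are done.) Setting $\hat d:=\min K_{\mathrm{chain}}$ when the set $K_{\mathrm{chain}}=\{k:\pi_V\text{ commits to the chain}\}$ is nonempty, and $\hat d:=|C|$ otherwise (noting $d_1\le|C|$), these facts sandwich $\hat d\in[\epsilon\,d_1,\;d_1/\epsilon+1]$, a constant-factor estimate of $d_1(C,x)$.

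The remaining point is that $\hat d$ is computable by a parallel algorithm: for each of the polynomially many $k$ I would construct $s_0(C,x,k)$ in logspace, evaluate the two successor values of $V$ and take the $\arg\max$ to read off the commitment, run all $|C|$ of these in parallel, and finish with a $\mathsf{TC}^0$ MIN. As $\mathsf{TC}$ is closed under this poly-fold parallel composition, $\hat d$ is an $\mathsf{L}$-uniform $\mathsf{TC}$ family, contradicting the $\mathsf{P}$-completeness of constant-factor DO1; hence a bad environment must exist. Part~(c) follows by the identical argument, since it uses only that the first-action map $\,(C,x,k)\mapsto\pi(s_0)\in\{\text{commit }C,\text{commit chain}\}$ is computed in parallel, which holds for any parallel policy $\pi$, not merely a greedy $\pi_V$. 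I expect the main obstacle to be part~(b): because an arbitrary $\pi_V$ need not be monotone in $k$, one cannot simply binary-search for a switchover, and the argument must instead extract $d_1$ from the two-sided bounds that hold for all $k$ simultaneously, and then verify that the extraction (a parallel sweep plus MIN) genuinely stays inside $\mathsf{TC}$.
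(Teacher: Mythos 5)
Your proposal is correct in substance and reaches the same contradiction with Theorem~\ref{thm:ks}, but it is organized differently from the paper's proof, and the difference is instructive. For part~(a) the paper never exploits the value guarantee at the initial state: it evaluates $V$ only at \emph{post-first-action} states across a geometric sweep of chain lengths $2^{0},\dots,2^{m}$, locates a switchover index, and extracts an estimate with a factor-$2/\epsilon$ analysis. Your observation that $V^{*}(s_0)=\max\{d_1(C,x),k\}$, so that with $k=1$ the single number $V(s_0)$ is already an $[\epsilon,b]$-approximation of $d_1(C,x)$, short-circuits all of that; it is simpler, equally rigorous, and it sidesteps the indexing confusion in the paper's sweep (the paper's estimate is written $2^{m-k}$ although the chain in $E_\ell$ has length $2^{\ell}$). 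For parts~(b)/(c) the two arguments genuinely converge: both read the policy's forced first choice across a family of environments with varying chain length and turn it into a constant-factor estimate of $d_1$. Your version sweeps all $k\in\{1,\dots,|C|\}$ rather than powers of two, defines $\hat d=\min K_{\mathrm{chain}}$ without assuming any monotone switchover (a point the paper glosses over), and your sandwich $\hat d\in[\epsilon d_1,\,d_1/\epsilon+1]$ checks out, with the additive $+1$ absorbed multiplicatively once $d_1\ge 1$. The presentation order is reversed---the paper proves (c) and treats (b) as a special case, you prove (b) and note (c) needs only parallelism of the first-action map---but this is immaterial.

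Two small repairs are needed. First, the $d_1=0$ case: your ``pad in a true depth-$1$ gate'' WLOG is not by itself sufficient, because an approximate-DO1 algorithm must output exactly $0$ when $d_1(C,x)=0$, and padding destroys that information (in your part-(b) extraction the policy is then forced onto the chain for every $k$, giving $\hat d=1\neq 0$). The fix is the paper's explicit preliminary step: in parallel, test whether any depth-$1$ gate of $C$ outputs $1$; by monotonicity of alternating circuits, $d_1=0$ if and only if none does, and in that case output $0$ directly. Second, ``evaluate the two successor values of $V$'' is loose: the greedy first action ranges over all $|C|+k$ gate selections, so you must evaluate $V$ at every successor and read the commitment off the $\arg\max$; this is still polynomially many parallel evaluations, so the argument survives, but it should be stated that way.
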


\begin{proof}
\begin{enumerate}[label=(\alph*)]
\item We argue by contradiction.  
Assume there exists a parallel algorithm that, for some constant $0<\epsilon<1$, outputs a value function $V$ satisfying

$$
  V(s)\;\in\;[\epsilon\,V^{*}(s),\,V^{*}(s)]
  \quad\text{for every state }s\text{ in every DO1 environment}.
$$

Since any factor $b>1$ can be removed by division, we set $b=1$ without loss of generality. 

\textbf{Special case.} We need to check first the special case where $d_1(C, x) = 0$. This is done as follows. First, perform a topological sort of $C$, which is parallelizable (in $\mathsf{TC}^2$, in fact~\citep{cook1985taxonomy}). This then allows us to find all the gates that are in the first layer. Next, for each gate in the first layer, we test in parallel whether that gate outputs 1. Let their outputs be $y_1, \dots, y_m$. If $\mathrm{OR}(y_1, \dots, y_m) = 0$, then $d_1(C, x) = 0$, because alternating circuits are monotonic. Otherwise, $d_1(C, x) \geq 1$.

Having thus handled the special case, we assume that $d_1(C, x) \geq 1$ for the rest of the proof.

\textbf{Two-alternative forced choice experiments in parallel.} Given an input $(C,x)$ with $n:=|C|$ gates and depth-of-1 equal to $d_{1}=d_{1}(C,x)$, choose $m\in\mathbb{N}$ such that
$2^{m-1}<n\le 2^{m}$.  
For each index $\ell\in\{0,\dots,m\}$ construct, in parallel, a DO1 environment~$E_{\ell}$ whose second circuit is an alternating chain of length $2^{\ell}$, with the only input being ~\texttt{1}.
Within each $E_{\ell}$ evaluate (again in parallel)

\begin{itemize}
  \item $V\!\bigl(s_{g}^{(\ell)}\bigr)$, where $s_{g}^{(\ell)}$ is the state after initially selecting gate $g\in C$;
  \item $V\!\bigl(s_{\text{chain}}^{(\ell)}\bigr)$, where $s_{\text{chain}}^{(\ell)}$ is the state after selecting the input gate of the chain.
\end{itemize}

The whole procedure requires 

$$
  (n+1)(m+1)=\bigl(|C|+1\bigr)\bigl\lceil\log_{2}|C|\bigr\rceil
$$

parallel evaluations.

\textbf{Identifying the switchover index.}  
For $\ell=m$ the chain depth $2^{m}$ is at least as long as what depth-of-1 that $C$ can create, hence, if the value function is any good, it should satisfy 
$V\!\bigl(s_{\text{chain}}^{(m)}\bigr)\ge V\!\bigl(s_{g}^{(m)}\bigr)$ for all $g$.  
As $\ell$ decreases the chain shortens; eventually picking $C$ should be better than picking the chain. This intuition allows us to define the following decision procedure.

Let $k$ be the smallest $\ell$ such that

$$
V\!\bigl(s_{\text{chain}}^{(\ell)}\bigr)\;\le\;\max_{g}V\!\bigl(s_{g}^{(\ell)}\bigr).
$$

Output $d' := 2^{m-k}$ as the estimate for $d_{1}$.

There are two special cases to handle. If the agent rejects the chain even for $\ell = 0$, then output $|C|$. If the agent always picks the chain, then output $d' = 1$.

\medskip\noindent
\textbf{Quality of the estimate.}  
Suppose the switchover occurs, such that the agent picks the chain for $E_k$, but switches to picking the circuit for $E_{k-1}$. Consider in detail what happens for $E_k$. 

Choose $g^*$ attaining the maximum in the definition of $k$ and set

$$
s:=s_{g^*}^{(k)}, \quad s^{\prime}:=s_{\text {chain }}^{(k)}
$$

by construction, $V(s)\ge V(s')$, and by the assumed guarantee on $V$,
$$
  V(s)\in[\epsilon d_{1},\,d_{1}], 
  \qquad
  V(s')\in[\epsilon d',\,d'].
$$
Hence $d_{1}\ge\epsilon d'$.  

Similarly, the argument in the case for $E_{k-1}$ shows $\epsilon d_{1}\le 2d'$, so altogether

$$
  d_{1}\;\in\;\bigl[\epsilon d',\,\tfrac{2}{\epsilon}\,d'\bigr].
$$

A similar argument applies for the two special cases where no switchover occurs.

Therefore the procedure is an approximation algorithm for DO1 that operates in parallel, contradicting Theorem~\ref{thm:ks}. 
\item A special case of~(c).
\item
Assume for contradiction that there exists a parallel decision algorithm that is within an $\epsilon$ of optimality. That is, when the decision algorithm is applied in any DO1 environment, it always achieves reward at least $\epsilon r^*$. Then, for any particular DO1 environment, its first action must be an approximate solution as to whether $(C,x)$ or the chain possesses larger depth-of-1. Now, the same construction of~(a) yields
a parallel algorithm solving $\epsilon'$-approximate DO1, again contradicting Theorem~\ref{thm:ks}.
\end{enumerate}
\end{proof}

\textbf{Comment.} By prepending beneath the circuit $C$ a length-$|C|$ alternating chain of gates, and making some minor adjustments to the proof, we can show that any parallel decision algorithm achieves linear regret in the worst case, meaning that $r_H - r^* = \Theta(H)$.

Although our analysis uses the DO1 problem in particular, there are some other $\mathsf{P}$-complete problems, such as linear programming, that are $\mathsf{P}$-complete to approximate. See for instance
\citep{sahni1976p, serna1991approximating, diaz1997paradigms, greenlaw1995limits} for some examples. This leads us to conjecture that inherent seriality may be a fairly common phenomenon in RL, not particular to the DO1 setup.

The barrier in Theorem~\ref{thm:serial-rl} can be bypassed in several ways:

\begin{itemize}
  \item In the unlikely case that $\mathsf{TC}=\mathsf{P}$ is proven, it would have great consequences for complexity theory in general, analogous to the case where $\mathsf{P}=\mathsf{NP}$ is proven. This rejects the fundamental assumption in the theorem.
  \item By employing a serial learning algorithm that runs in polynomial (not polylog) time, one may discover the right policy, which can then be compressed down into a policy that runs faster than serial. This bypasses the ``$\mathsf{L}$-uniform'' part of the obstacle. 
  % This is true in RL practice, as training a single policy typically requires a large number of rollouts, possibly exponential in $n$. This would then go far beyond the capacity of a logspace algorithm, which can only have runtime polynomial in $n$.
  \item By allowing the learned policy or value function to be non-parallel, the ``$\mathsf{TC}$ circuit family'' part of the obstacle is bypassed. This is true for certain RL algorithms, especially model-based methods that simulate the environment dynamics before making a decision.
  \item For some RL applications, one may be able to tolerate arbitrarily poor worst-case performance, as long as they occur rarely.
\end{itemize}

Part (b) of the theorem may explain the observation in \citet{kevin20251000}. In that work, they trained both policy networks and value function networks by actor-critic methods, and noted that using deeper networks on both policy and value function improved performance. Part (b) of the theorem suggests that, when the network for approximating the value function contains less serial compute than the environment demands, then the corresponding exact $V$-maximizing policy would suffer arbitrarily bad worst-case performance. Though the theorem does not exactly apply when the policy is inexact, it suggests that the same phenomenon happens for an actual policy network trained to merely approximate the $V$-maximizing policy.

\end{document}